\documentclass[pdflatex,sn-apa]{sn-jnl}

\usepackage{placeins} 
\usepackage{subcaption} 
\usepackage{enumitem} 
\usepackage{csquotes} 
\usepackage{tikz}
\usetikzlibrary{
    graphs,                   
    arrows.meta,              
    decorations.markings,      
    knots,                    
    calc,                     
    fit,                      
    decorations.pathmorphing,  
    tikzmark,                 
    backgrounds               
}

\usepackage{graphicx}%
\usepackage{multirow}%
\usepackage{amsmath,amssymb,amsfonts}%
\usepackage{amsthm}%
\usepackage{mathrsfs}%
\usepackage[title]{appendix}%
\usepackage{xcolor}%
\usepackage{textcomp}%
\usepackage{manyfoot}%
\usepackage{booktabs}%
\usepackage{algorithm}%
\usepackage{algorithmicx}%
\usepackage{algpseudocode}%
\usepackage{listings}%

\newcommand{\Exp}[1]{\ensuremath{\text{E} \left[ #1 \right]}}
\newcommand{\Empexp}[1]{\ensuremath{\text{E}_n \left[ #1 \right]}}

\newcommand{\indep}{\ensuremath{\mbox{\(\perp \!\!\!\! \perp\)}}}
\newcommand{\argmin}[1]{ \ensuremath{ \underset{#1}{\operatorname{argmin}}\; } }
\newcommand{\pder}[2]{\frac{\partial #1}{\partial #2}}

\newcommand{\yyhat}{\ensuremath{\widehat{y}}}
\newcommand{\yhat}{\ensuremath{\widehat{Y}}}
\newcommand{\yhatu}{\ensuremath{\widehat{Y}_{\hat{\theta}_\text{U}}}}
\newcommand{\yhatft}{\ensuremath{\widehat{Y}_{\hat{\theta}_\text{FT}}}}
\newcommand{\yhatspt}{\ensuremath{\widehat{Y}_{\hat{\theta}_\text{SPT}}}}
\newcommand{\ls}{\ensuremath{\lambda_\text{SDP}}}
\newcommand{\lp}{\ensuremath{\lambda_\text{PPD}}}

\newcommand{\scm}{\ensuremath{\mathcal{M}}}
\newcommand{\scmdef}{\ensuremath{\mathcal{M} = \langle \mathbf{U}, \mathbf{V}, \mathbf{F} \rangle}}
\newcommand{\scmhat}{\ensuremath{\mathcal{M}_{\yhat}}}
\newcommand{\scmhatdef}{\ensuremath{\mathcal{M}_{\yhat} = \langle \widetilde{\mathbf{U}}, \widetilde{\mathbf{V}}, \widetilde{\mathbf{F}} \rangle}}
\newcommand{\diagram}{\ensuremath{\mathcal{G}}}
\newcommand{\parents}{\ensuremath{\mathbf{PA}}}
\newcommand{\pparents}{\ensuremath{\mathbf{pa}}}
\newcommand{\pathset}{\ensuremath{\Pi}}
\newcommand{\dashedbiarrow}{
  \mathrel{
    \vcenter{
      \hbox{\(\dashleftarrow\)}
    }
    \mkern-18mu
    \vcenter{
      \hbox{\(\dashrightarrow\)}
    }
  }
}
\tikzset{
    neutral node/.style={draw, circle, fill=black, inner sep=0pt, minimum size=1.5mm},
    unfair node/.style={neutral node, BrickRed, fill=BrickRed},
    fair node/.style={neutral node, ForestGreen, fill=ForestGreen},
    neutral edge/.style={>={Latex[length=1.2mm,width=1.2mm]}, line width=0.5pt, shorten >=3pt, shorten <=3pt},
    fair edge/.style={neutral edge, ForestGreen},
    unfair edge/.style={neutral edge, BrickRed, decorate, decoration={snake, amplitude=.2mm, segment length=1mm, post length=2.4mm, pre length=1mm}},
    between edge/.style={>={Latex[length=2.5mm,width=2.5mm]}, line width=1.5pt, shorten >=5pt, shorten <=5pt}
}
\definecolor{cbDeepBlue}{HTML}{1170aa}
\definecolor{cbBrightOrange}{HTML}{fc7d0b}
\definecolor{cbSlateGray}{HTML}{a3acb9}
\definecolor{cbCharcoal}{HTML}{57606c}
\definecolor{cbSkyBlue}{HTML}{5fa2ce}
\definecolor{cbDarkOrange}{HTML}{c85200}
\definecolor{cbSoftGray}{HTML}{7b848f}
\definecolor{cbPaleBlue}{HTML}{a3cce9}
\definecolor{cbPeach}{HTML}{ffbc79}
\definecolor{cbLightGray}{HTML}{c8d0d9}
\definecolor{cbBlue}{named}{cbDeepBlue}
\definecolor{cbOrange}{named}{cbBrightOrange}
\definecolor{cbCharcoal}{named}{cbCharcoal}
\definecolor{cbPaleBlue}{named}{cbPaleBlue}
\definecolor{comment}{named}{cbDarkOrange}
\definecolor{BrickRed}{HTML}{b6321c}
\definecolor{ForestGreen}{HTML}{009b55}
\definecolor{NavyBlue}{HTML}{000080}
\definecolor{PredictorBlue}{HTML}{0080CF}
\definecolor{SelectionYellow}{HTML}{FCA311}
\definecolor{FairGreen}{HTML}{328B48}
\definecolor{UnfairRed}{HTML}{D41715}
\definecolor{BrickRed}{HTML}{b6321c}

\theoremstyle{thmstyleone}%
\newtheorem{theorem}{Theorem}
\newtheorem{proposition}{Proposition}

\theoremstyle{thmstyletwo}%
\newtheorem{remark}{Remark}%

\theoremstyle{thmstylethree}%
\newtheorem{definition}{Definition}%

\usepackage{fancyhdr}
\fancypagestyle{firstpage}{
    \fancyhf{}
    
    \renewcommand{\footrule}{%
        \hbox to\headwidth{\rule{37mm}{0.2mm}}
        \vskip\footruleskip}
    \fancyfoot[L]{\small Published in \textit{Machine Learning} (Springer) on May 18, 2026 \\DOI: \href{https://doi.org/10.1007/s10994-026-07061-7}{10.1007/s10994-026-07061-7}}
}

\begin{document}
\thispagestyle{firstpage}

\title[Tuning Derivatives for Causal Fairness in Machine Learning]{Tuning Derivatives for Causal Fairness in Machine Learning}


\author*[1]{\fnm{Filip} \sur{Edstr\"om}}\email{filip.edstrom@umu.se}

\author[2]{\fnm{Guilherme W. F.} \sur{Barros}}\email{guilherme.barros@umu.se}

\author[1]{\fnm{Tetiana} \sur{Gorbach}}\email{tetiana.gorbach@umu.se}

\author[1]{\fnm{Xavier} \spfx{de} \sur{Luna}}\email{xavier.de.luna@umu.se}

\affil*[1]{\orgdiv{Department of Statistics, Ume\aa\ School of Business, Economics and Statistics}, \orgname{Ume\aa\ University}, \orgaddress{\city{Ume\aa}, \postcode{901 87}, \country{Sweden}}}
\affil[2]{\orgdiv{Integrated Science Lab, Department of Physics}, \orgname{Ume\aa\ University}, \orgaddress{\city{Ume\aa}, \postcode{901 87}, \country{Sweden}}}

\abstract{
    Artificial-intelligence systems are becoming ubiquitous in society, yet their predictions typically inherit biases with respect to protected attributes such as race, gender, or age. Classical fairness notions, most notably Statistical Parity (SP), demand that predictions be independent of the protected attributes, but are overly restrictive when these attributes influence mediating variables that are considered business necessities. Recent causal formulations relax SP by distinguishing allowed from not-allowed causal paths and by complementing SP with Predictive Parity (PP), requiring the predictor to replicate the legitimate influence of business-necessities. Existing path-based definitions are mainly practical when applied to categorical attributes. This paper introduces a new framework for fairness in structural causal models that is tailored to continuous protected attributes. We formalize SP and PP through path-specific partial derivatives, establish conditions under which these criteria coincide with prior causal definitions, and characterize when a fair predictor, one that satisfies SP along not-allowed paths while achieving PP along allowed paths, exists. Building on this theory, we propose a fair tuning algorithm that either constructs such a predictor or, when not possible, allows for a trade-off between SP and PP. We present experiments on simulated and real data to evaluate our proposal, compare it with previously proposed methods, and show that it performs better when PP is considered.
}

\keywords{
Structural Causal Models, Path-Specific Effects, Causal Fairness, Statistical Parity, Predictive Parity 
}

\maketitle

\section{Introduction}
\label{sec:introduction}

Artificial-Intelligence (AI) systems are increasingly being incorporated into decision-making processes. Yet, AI systems have been criticized for replicating discriminatory behaviors observed in society, for example, racial bias in the COMPAS recidivism risk assessment, gender bias in Amazon's hiring system, and bias with respect to age \citep{stypinska_ai_2023}. \footnote{ ''Machine Bias---Risk assessments in criminal sentencing," \textit{ProPublica},  \url{https://www.propublica.org/article/machine-bias-risk-assessments-in-criminal-sentencing}, accessed June 18, 2025.} \footnote{ ''Amazon scrapped 'sexist AI' tool," \textit{BBC},  \url{https://www.bbc.com/news/technology-45809919}, accessed June 18, 2025.} When regulation or ethical aspects do not allow replicating bias with respect to a feature, for example, age, then we call such a feature a ``protected attribute". Fairness in machine learning is the field devoted to defining metrics to evaluate biases with respect to protected attributes and to finding ways to develop AI systems that avoid or minimize biases existing in the data used to create these systems. 

Consider situations where a data-driven predictor $\yhat$ for a variable of interest $Y$ is obtained using data for which one or several protected attributes $\mathbf{X}$ are related to $Y$. Several definitions of what constitutes fair predictors have been proposed, see \cite{barocas_fairness_2023} for an overview. An early and intuitive fairness condition, Statistical Parity (SP)  \citep{darlington_another_1971}, requires that predictions be independent of protected attributes, that is, $\yhat \indep \mathbf{X}$. However, this condition may be too strict if the protected attributes depend on other attributes that are deemed necessary for prediction, called ``business necessities" in \cite{plecko_reconciling_2024} and ``resolving variables" in \cite{kilbertus_avoiding_2017}. As an example, consider a recidivism risk assessment system, such as COMPAS, which produces a risk score $\yhat$ indicating the probability of recidivism for an offender based on the features age, $A$, race, $R$, and the number of prior offenses, $P$, all dependent on each other. Let $R$ and $A$ be protected attributes, while $P$ is a business necessity. In this case, our ideal goal would be to design a prediction algorithm $\yhat$ that relies on $P$, but not on $A$ and $R$ (see Figure \ref{fig:problem}). However, satisfying SP in $A$ and $R$ when they are related to $P$ is not possible without excluding $P$. To offer a solution to this problem and allow for more flexibility than SP, we need to consider how the attributes are related to each other and $Y$. 

\begin{figure}
    \centering
    \begin{tikzpicture}
        \begin{scope}[shift={(0,0)}]
            \node (DGP) at (0, -2) {Unfair process};
            
            \node[unfair node] (Adgp) at (-0.8, 0) {};
            \node[unfair node] (Rdgp) at (0, 0.8) {};
            \node[neutral node] (Pdgp) at (0, -0.8) {};
            \node[unfair node] (Ydgp) at (0.8, 0) {};
    
            \node[left] at (Adgp.north) {$A$};
            \node[above] at (Rdgp.west) {$R$};
            \node[below] at (Pdgp.south) {$P$};
            \node[right] at (Ydgp.east) {$Y$};
    
            \draw[->, fair edge] (Rdgp) to (Pdgp);
            \draw[->, unfair edge] (Rdgp) to (Ydgp);
            \draw[<->, neutral edge, dashed] (Adgp) to[bend left] (Rdgp);
            \draw[->, fair edge] (Adgp) to (Pdgp);
            \draw[->, unfair edge] (Adgp) to (Ydgp);
            \draw[->, fair edge] (Pdgp) to (Ydgp);
        \end{scope}
    
        \begin{scope}[shift={(4,0)}]
            \node (Data) at (0, -2) {Data}; 
            \node[below] (Table) at (0,1) {
                \small
                \begin{tabular}{@{}c@{}c@{}c@{}c@{}}
                    $A$      &$R$        & $P$        &  $Y$ \\
                    \hline                 
                    $a_1$    &$r_1$      & $p_1$      &  $y_1$    \\
                    $\vdots$ &$\vdots$   & $\vdots$   &  $\vdots$ \\ 
                    $a_n$    &$r_n$      & $p_n$      &  $y_n$    \\
                    \hline
                \end{tabular}
            };
        \end{scope}
    
        \begin{scope}[shift={(8,0)}]
            \node (Hat) at (0, -2) {Fair predictor};
            
            \node[unfair node] (Ahat) at (-0.8, 0) {};
            \node[unfair node] (Rhat) at (0, 0.8) {};
            \node[neutral node] (Phat) at (0, -0.8) {};
            \node[fair node] (Yhat) at (0.8, 0) {};
    
            \node[left] at (Ahat.north) {$A$};
            \node[above] at (Rhat.west) {$R$};
            \node[below] at (Phat.south) {$P$};
            \node[right] at (Yhat.east) {$\yhat$};
    
            \draw[->, fair edge] (Rhat) to (Phat);
            \draw[<->, neutral edge, dashed] (Ahat) to[bend left] (Rhat);
            \draw[->, fair edge] (Ahat) to (Phat);
            \draw[->, fair edge] (Phat) to (Yhat);
        \end{scope}
    
        \draw[->, between edge] (0.5, 1) to[bend left=15] node[above] {Observe} (3, 1);
        \draw[->, between edge] (5, 1) to[bend left=15] node[above] {Learn} (7.5, 1);
    \end{tikzpicture}
    \caption{The problem of creating a fair predictor from data. We observe data from a biased process, with not-allowed paths (red, squiggly arrows) from the protected attributes $A, R$ to the outcome $Y$. Fair prediction removes the not-allowed path, but maintains the allowed paths (green, straight arrows).}
    \label{fig:problem}
\end{figure}

For this purpose, we use Structural Causal Models (SCMs) \citep{pearl_causality_2009}. Such models allow us to decompose the way protected attributes relate to $Y$ into causal paths \citep{zhang_causal_2017, kilbertus_avoiding_2017,  kusner_counterfactual_2017, nabi_fair_2018, zhang_fairness_2018, chiappa_path-specific_2019, wu_pc-fairness_2019, plecko_causal_2024}. In our example, this means considering the paths that represent the direct effect of $A$ and  $R$ on $Y$, illustrated $A \to Y, R \to Y$, as \textit{not-allowed} paths and the ones that represent indirect effects through $P$, $A \to P \to Y$ and $R \to P \to Y$, as \textit{allowed} paths, see Figure \ref{fig:problem}. Recently, \cite{plecko_reconciling_2024} not only constrained how the not-allowed paths are modeled, but also how the allowed paths are modeled through the concept of Predictive Parity (PP) \citep{chouldechova_fair_2017}. For example, PP with respect to $P$ states that the predictor $\yhat$ should capture all the variance in $Y$ due to $P$, that is, $Y \indep P \mid \yhat$. \cite{plecko_reconciling_2024} define SP and PP in terms of contrasts, that is, the change in a variable of interest due to a discrete change in another variable, and suggests to constrain predictors to satisfy such a causal SP along contrasts over not-allowed paths and causal PP over contrasts along allowed paths. Common to most of the fairness literature based on causal contrasts is that it is most suitable for categorical protected attributes such as gender or race. However, when considering continuous protected attributes, such as age, causal contrasts must be evaluated at many points, which is impractical. Two exceptions presenting definitions that are readily applicable to continuous protected attributes are given in \cite{lindholm_discrimination-free_2022} and \cite{kancheti_matching_2022}, although these do not explicitly consider predictive parity.

In this paper, we contribute as follows: We define SP and PP in terms of partial derivatives along paths to specifically address continuous attributes. We show under which conditions our definitions imply previous causal definitions in \cite{plecko_reconciling_2024} and define a fair predictor in terms of our concepts of SP and PP along allowed and not-allowed paths, respectively. We formulate a necessary and sufficient condition under which such a fair predictor exists and propose a tuning algorithm that constructs such a fair predictor when it exists. When the latter does not exist, the algorithm offers a flexible compromise between statistical and predictive parity. 

The rest of the paper is structured as follows. In Section \ref{sec:theory} we provide relevant background theory, define our novel concepts of statistical and predictive parity in terms of partial derivatives, and provide theoretical properties. In Section \ref{sec:method} we introduce an algorithm tuning SP and PP for fair prediction. We study this algorithm on neural network predictors in Section \ref{sec:numerical_experiments} using numerical experiments. We also give comparisons with earlier methods proposed in \cite{lindholm_discrimination-free_2022} and \cite{kancheti_matching_2022}. In Section \ref{sec:real_world_example}, we use the COMPAS dataset to illustrate the use of the introduced fair tuning algorithm  and compare our novel concepts of SP and PP with existing concepts \citep{plecko_reconciling_2024}.  Finally, Section \ref{sec:conclusion} concludes the paper. 

\section{Theory}
\label{sec:theory}
Following prior work on causal fairness
\citep{zhang_causal_2017, kilbertus_avoiding_2017,  nabi_fair_2018, zhang_fairness_2018,  kusner_counterfactual_2017, wu_pc-fairness_2019}, we adopt SCMs \citep{pearl_causality_2009} as our formal framework.
 
\begin{definition}[Structural Causal Model,  Def. 7.1.1 in \citealp{pearl_causality_2009}]
    \label{def:scm}
    A Structural Causal Model (SCM) is a triple 
    \[\scmdef\]
    where:
    \begin{enumerate}[label=\roman*.]
        \item $\mathbf{U}$ is a set of background variables, that are determined by factors outside the model;
        \item $\mathbf{V}$ is a set $\{V_1, V_2, ..., V_n\}$ of variables, called \textit{endogenous}, that are determined by variables in the model---that is, variables in $\mathbf{U} \cup \mathbf{V}$; and
        \item $\mathbf{F}$ is a set of functions (mechanisms) $\{f_1, f_2, ..., f_n\}$ such that each $f_i$ is a mapping from (the respective domains of) $U_i \cup \parents_i$ to $V_i$, where $U_i \subseteq \mathbf{U}$ and $\parents_i \subseteq \mathbf{V} \setminus V_i$, called parents of $V_i$, and the entire set $\mathbf{F}$ forms a mapping from $\mathbf{U}$ to $\mathbf{V}$. In other words, each $f_i$ in 
        \[
        v_i = f_i(\pparents_i, u_i), \quad i = 1, ..., n
        \]
        assigns a value to $V_i$ that depends on (the values of) a subset of variables in $\mathbf{V} \cup \mathbf{U}$, and the entire set $\mathbf{F}$ has a unique solution $\mathbf{V}(\mathbf{u})$.
    \end{enumerate}
\end{definition}

We consider probabilistic SCMs, with a probability distribution assigned to $\mathbf{U}$. An SCM $\scm$ can be attributed with a causal diagram, $\diagram$, that describes the relationships between the variables in $\scm$ such that, if $V_i \in \parents_j$, then the arrow $V_i \to V_j$ is in $\diagram$. $V_i \to V_j$ is an example of a directed path of length $|\pi| = 1$. Formally, a directed path $\pi$ is an ordered sequence of variables, $\pi = (V_1, ..., V_{|\pi|})$, where every variable is a parent of the next variable in the sequence. We say that $V \in \pi$ if $V$ appears on the path $\pi$, and if there is a directed path from $V_i$ to $V_k$ then $V_i$ is called an ancestor of $V_k$, and conversely $V_k$ is called a descendant of $V_i$. When considering multiple paths, we use $\pathset$ to denote a set of paths. Additionally, if the background variables related to two endogenous variables $V_i, V_j$ are correlated, that is, $U_i$ and $U_j$ are not independent, this may be illustrated with a dashed, double-headed arrow, $V_i \dashedbiarrow V_j$.

SCMs allow us to model interventions, where an intervention on a variable $V_i$ replaces the mechanism $f_i$ according to the intervention \citep{pearl_causality_2009}. For example, the intervention of assigning the value $v_i$ to the variable $V_i$ replaces $V_i = f_i(\parents_i, U_i)$ with $V_i = v_i$. We use the potential outcome notation \citep{rubin_estimating_1974}, $V_j(v_i)$, to describe the counterpart of $V_j$ under the intervention $V_i = v_i$.


Using the potential outcome notation and paths, we can now define a path-specific effect of a change in the root node of a path, $V_1$, from the value $v_1$ to the value $v_1^*$ on the target of the path, $V_{|\pi|}$. The path-specific effect is then the difference in $V_{|\pi|}$ when the change in $V_1$ is allowed to propagate only over the path $\pi$. This is formalized using the potential outcome notation and nested counterfactuals in \cite{shpitser_counterfactual_2013}. For instance, for an SCM with diagram $X \to Y, X \to W \to Y$ we can write $Y(W(x), X)$ to represent the intervention of setting $X = x$ only along the path $\pi: X \to W \to Y$, not affecting the value of $X$ along $X \to Y$. For convenience, we use the notation $Y(\pi(x)) := Y(W(x), X)$. A path-specific effect in a setting $\mathbf{Z}$ is equivalent to a counterfactual contrast with respect to a specific path and a change from $v_1$ to $v_1^*$ \citep{plecko_causal_2024}, that is,
\begin{equation*}
    P(V_{|\pi|}(\pi(v_1^*)) \mid \mathbf{Z}) - 
    P(V_{|\pi|}(\pi(v_1)) \mid \mathbf{Z}).
\end{equation*}
A special case of path-specific effect is the direct effect, where the path only consists of one edge, for example, $V_i \to V_j$.

\subsection{Causal Fairness Notions}

\cite{plecko_reconciling_2024} define Causal SP for a predictor $\yhat$ in terms of three sets of paths: direct (Ctf-DE), indirect (Ctf-IE), and spurious (Ctf-SE) in a specific SCM, the standard fairness model \citep{plecko_causal_2024}. 

\begin{definition}[Causal SP, Def. 4 in \citealp{plecko_reconciling_2024}]
    \label{def:causal-SP}
    Let $\pathset_{\text{IE}}$ be the set of all indirect directed paths from $X$ to $\yhat$, that is, $|\pi_i| > 2 \ \forall \pi_i \in \pathset_{IE}$. 
    Then, $\yhat$ satisfies causal statistical parity with respect to a change from $x_0$ to $x_1$ in the protected attribute $X$ if the direct, indirect, and spurious effects are zero:
    \begin{align*}
        &\underbrace{P(\yyhat(x_1, \pathset_{\text{IE}}(x_0)) \mid x_0) -  P(\yyhat(x_0, \pathset_{\text{IE}}(x_0)) \mid x_0)}_{\text{Ctf-DE}}\\
        &=
        \underbrace{P(\yyhat(x_1, \pathset_{\text{IE}}(x_0)) \mid x_0) -  P(\yyhat(x_1, \pathset_{\text{IE}}(x_1)) \mid x_0)}_{\text{Ctf-IE}} \\
        &= 
        \underbrace{P(\yyhat(x_1) \mid x_0) -  P(\yyhat(x_1) \mid x_1)}_{\text{Ctf-SE}}
        = 0.
    \end{align*}
\end{definition}

\citet{plecko_reconciling_2024} also defines Causal PP with respect to a counterfactual contrast. 

\begin{definition}[Causal PP, Def. 5 in \citealp{plecko_reconciling_2024}]
    \label{def:causal-PP}
    Let $\yhat$ be a predictor of the outcome $Y$, and let $X$ be a protected attribute. Then, $\yhat$ is said to satisfy causal predictive parity with respect to a counterfactual contrast along the path $\pi: X \to ... \to Y$ with respect to a change in $X$ from $x_0$ to $x_1$ in setting $\mathbf{Z}$ if
    \begin{equation*}
        \Exp{Y(\pi(x_1)) \mid \mathbf{Z}} - \Exp{Y(\pi(x_0)) \mid \mathbf{Z}}
        =
        \Exp{\yhat(\pi(x_1)) \mid \mathbf{Z}} - \Exp{\yhat(\pi(x_0)) \mid \mathbf{Z}}.
    \end{equation*}
\end{definition}
The fairness of a predictive algorithm can then be defined using these two definitions, as we discuss in Section \ref{sec:fairpred} below. These definitions are most natural when $X$ is categorical, but when $X$ is continuous there is an infinite number of possible contrasts. In practice, one needs to consider the definitions above for a possibly large set of points $x_1$ for each $x$, and computation then scales quadratically with this number of points. This becomes even more cumbersome when PP needs to be evaluated along all allowed paths as we shall see below.

\subsection{Derivative Fairness Notions}
\label{sec:theory:derivative-notions}
To obtain a more practical solution for continuous features, we define SP and PP in terms of derivatives along paths. This is a natural extension to contrasts as they can be regarded as discrete derivatives, up to a division by an increment. We consider an SCM $\scmhat$ that describes the prediction algorithm, that is, the causal structure of the algorithm that outputs predictions $\yhat$ and the features used as its input. Using this SCM, we define SP in terms of derivatives by applying the chain rule of calculus along the structural equations of a path.

\begin{definition}[Statistical Parity in Derivatives, SPD]
    \label{def:derivative-sp}
    Let $\mathbf{V}$ be a set of variables, where $X \in \mathbf{V}$ is a protected attribute, $\yhat \in \mathbf{V}$ is an outcome predictor. For a given SCM $\scmhat$$ = \{\mathbf{U}, \mathbf{V}, \mathbf{F}\}$, consider a directed path $\pi: V_1 = X \to \dots \to \yhat$ where all variables along the path have compact support and each mechanism $f_{k}$ along the path is assumed to be differentiable, then $\yhat$ satisfies statistical parity in derivatives along path $\pi$ if
    \[
        \prod_{k = 2}^{|\pi|} \pder{f_{V_k}(\parents_k, u_k)}{v_{k-1}} 
        = 0 \quad \forall x, \mathbf{u},
    \]
    where $|\pi|$ is the length of the path, $V_{|\pi|} = \yhat$, $f_{V_k}$ is the mechanism assigning values to $V_{k} \in \mathbf{V}$.
\end{definition}

If SPD holds, $\yhat$ is constant in the path-specific effect along $\pi$ and at least one arrow is missing along the considered path in the diagram describing $\scmhat$. Thus, SPD along a path $\pi$ can be achieved for a prediction algorithm that generates predictions $\yhat$, by simply excluding the parent of $Y$ along $\pi$ when fitting $\yhat$, in the literature called ``Fairness Through Unawareness" (FTU) \citep{mehrabi_survey_2021}. However, this will be limiting when we later consider PP. In that case, SPD can better be achieved by constraining the derivative of $f_{\yhat}$ with respect to its parent along $\pi$ to be zero. Another option is to generate an ancestor of $\yhat$ along $\pi$, such that it has zero derivative with respect to its parent along $\pi$.\footnote{By ``generate" we mean generating a variable artificially in place of an observed variable. This creates a corresponding SCM where the variable and its mechanism are replaced by their artificial counterparts. An example is a generated prediction $\yhat$ replacing the outcome $Y$.} In this case, the descendants of the generated variable should be generated sequentially, using the generated version of their parent along $\pi$. For example, in the unfair process in Figure \ref{fig:problem}, we may want to satisfy SPD along the path $A \to P \to Y$. We can achieve this by either generating $\yhat$ such that $\frac{\partial f_{\yhat}}{\partial p} = 0$, or by generating $\widehat{P} = f_{\widehat{P}}(\parents_{\widehat{P}})$ such that $\frac{\partial f_{\widehat{P}}}{\partial a} = 0$ and use $\widehat{P}$ in sequential prediction, that is, $\yhat = f_{\yhat}(\widehat{P})$. Examples of sequential predictors are neural causal models \citep{xia_causal-neural_2021} and, in the causal fairness literature, the methods developed in \citet{plecko_fair_2020} and \citet{van_breugel_decaf_2021} generate fair $\yhat$ sequentially. Additionally, the approaches in \citet{kusner_counterfactual_2017} and \citet{chiappa_path-specific_2019} infer latent variables prior to generating $\yhat$, thereby generating other variables than the outcome.

SPD is related to Causal SP (Definition \ref{def:causal-SP}) as follows.

\begin{proposition}
    \label{prop:sp-same}
    For a given SCM $\scmhat = \langle \mathbf{U}, \mathbf{V}, \mathbf{F}\rangle$, with $X, \yhat \in \mathbf{V}$, where $X$ is a protected attribute, let $\yhat$ be an outcome predictor that satisfies SPD with respect to $X$ along all directed paths from $X$ to $\yhat$. Then, $\yhat$ satisfies $\text{Ctf-DE} = \text{Ctf-IE} = 0$ in Definition \ref{def:causal-SP} for any values $x_1, x_0$ of $X$.
\end{proposition}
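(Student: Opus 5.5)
The plan is to show that, for every fixed $\mathbf{u}$, the structural map from $X$ to $\yhat$ is constant in the protected attribute along every directed path. Both Ctf-DE and Ctf-IE are contrasts of nested counterfactuals that differ only in the value of $X$ fed into some subset of paths. If every such feed is pointwise irrelevant, the two potential outcomes in each contrast are equal as random variables. Their conditional distributions given $X = x_0$ then coincide and the differences vanish.

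First, I would fix $\mathbf{u}$ and define the composite map. Order the variables topologically. For a nested intervention that sets $X$ to $x_1$ along a set of paths $\Pi_1$ and to $x_0$ along the remaining paths, write $\yhat$ as a composition of the mechanisms $f_{V_k}$. Treat each occurrence of $X$ as a separate argument $x^{(\pi)}$, one per path, so this is the standard "path copy" unrolling of the recursive equations. By the multivariate chain rule, the derivative of this composite with respect to the copy $x^{(\pi)}$ equals
\[
\prod_{k=2}^{|\pi|} \pder{f_{V_k}(\parents_k,u_k)}{v_{k-1}},
\]
evaluated at the appropriate intermediate values. SPD along all directed paths makes this product zero for all $x$ and $\mathbf{u}$. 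If paths share edges, the derivative with respect to a shared copy is a sum of such products, each of which is zero.

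Next, I integrate. Because each variable has compact support (in particular $X$ ranges over an interval) and the mechanisms are differentiable, the composite is constant in each $x^{(\pi)}$ by the mean value theorem. I then move from $x_0$ to $x_1$ one path copy at a time. For Ctf-DE, only the direct-path copy changes: $\yhat(x_1,\pathset_{\text{IE}}(x_0)) = \yhat(x_0,\pathset_{\text{IE}}(x_0))$ for every $\mathbf{u}$. For Ctf-IE, only the indirect copies change: $\yhat(x_1,\pathset_{\text{IE}}(x_0)) = \yhat(x_1,\pathset_{\text{IE}}(x_1))$. Conditioning on $X = x_0$ concerns the distribution of $\mathbf{U}$ only, so equal functions of $\mathbf{u}$ have equal conditional laws, and both contrasts are zero. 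If there is no direct edge, Ctf-DE is trivially zero.

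The main obstacle is making rigorous the claim that a product of derivatives along a path, evaluated at the factual values, controls the derivative of the nested-counterfactual composite. In the nested counterfactual, intermediate variables are evaluated at mixed arguments, not at the factual values. The SPD condition is stated "$\forall x, \mathbf{u}$", but each factor is evaluated at the parent values the mechanisms produce. I would address this by reading SPD as holding at all parent values in the (compact) support, which is what "$\forall x,\mathbf{u}$" together with the support assumption provides. With that reading, the chain-rule argument applies at every point of the mixed composite, and the remaining step is a routine application of the mean value theorem.
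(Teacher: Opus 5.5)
Your proof is correct. For Ctf-DE it matches the paper's: SPD on $X \to \yhat$ makes $f_{\yhat}$ constant in its $x$-argument, so the two counterfactuals agree for every $\mathbf{u}$.

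For Ctf-IE the routes differ. The paper argues by contradiction: a nonzero Ctf-IE is taken to yield a single indirect path whose mechanisms are all non-constant in their path-parent, which is said to contradict SPD. This is shorter and fits the remark after Definition~\ref{def:derivative-sp} that SPD means an arrow is missing. However, it needs two further facts:
\begin{enumerate}[label=(\roman*)]
\item SPD forces some factor $\partial f_{V_k}/\partial v_{k-1}$ to vanish identically. This follows once the product is required to vanish at all argument values: pick, for each factor, a point where it is nonzero.
\item A nonzero joint indirect effect is carried by a single path. This fails when paths interact, e.g.\ $\yhat = W_1 W_2$, $W_1 = W_2 = X$, $x_0 = 0$: switching both indirect paths to $x_1$ changes $\yhat$ by $x_1^2$, while switching either alone changes nothing.
\end{enumerate}
Your path-copy unrolling avoids (ii), since constancy in each leaf copy separately gives joint constancy on a box. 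It also brings (i) into the open as the ``obstacle'' you flag.

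That obstacle is real. Under the factual-values reading the proposition fails: take $X$ uniform on $[0,1]$, $W = X$, $\yhat = (X-W)^2$. The product condition holds along both paths at every factual point, yet $\yhat(x_1, W(x_0)) - \yhat(x_0) = (x_1-x_0)^2$.

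Two corrections to how you close the argument:
\begin{enumerate}[label=(\alph*)]
\item Compact support does not make the range of $X$ an interval. The mean value theorem needs the mechanisms' domains to be intervals or boxes.
\item ``All parent values in the support'' is not enough. Mixed configurations such as $(x_1, W(x_0))$ can lie outside the joint support of $\parents_{\yhat}$; in the example that support is the diagonal.
\end{enumerate}
SPD has to be read as holding on the full domain of each mechanism. The paper uses this implicitly when it calls $f_{\yhat}$ ``constant in $x$''. Under that reading your argument goes through as written.
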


\begin{proof}
    Let us first consider the direct path $\pi: X \to \yhat$. Then by SPD along $\pi$ we have that $f_{\yhat} (x, \pparents_{\yhat} \setminus x, u_{\yhat})$ is constant in $x$. From Definition 2 Ctf-DE is satisfied when 
    \[
        P(\yhat(x_1, \pathset_{\text{IE}}(x_0)) \mid x_0) 
        - 
        P(\yhat(x_0, \pathset_{\text{IE}}(x_0)) \mid x_0) = 0.
    \]
    From the definition of SCMs this is equivalent to 
    \[
        P(f_{\yhat}(x_1, \pathset_{\text{IE}}(x_0), u_{\yhat}) \mid x_0) 
        -
        P(f_{\yhat}(x_0, \pathset_{\text{IE}}(x_0), u_{\yhat}) \mid x_0) = 0,
    \]
    which holds since $f_{\yhat}$ is constant in $x$, that is, the probability function is the same for a change in $x$ along the direct path.
    
    Next, we show that $\text{Ctf-IE} = 0$ with a proof by contradiction: Let us assume that $\text{Ctf-IE} \neq 0$. Then, there exists an indirect path $\pi: X \to V_k \to \yhat$ such that its path-specific effect of a change from $x_0$ to $x_1$ is non zero, $P(\yhat(x,\pi(x_1)) \neq P(\yhat(x,\pi(x_0)),\forall x$. Then, all functions $f_{V_{k+1}}$ on the path $\pi$ are not constant in $v_k$. This is in contradiction with the assumption of the proposition that we have SPD along all directed paths. Therefore, we must have $\text{Ctf-IE} = 0$.
\end{proof}

In order to give a definition of PP along paths in terms of derivatives,  we want to compare derivatives between the prediction algorithm and the Data-Generating Process (DGP). To formalize the comparison, we consider two SCMs simultaneously: one describing the DGP, $\scmdef$, and one describing the prediction algorithm, $\scmhatdef$, including the prediction $\yhat$ and the features used to generate it. We assume that the variables in $\widetilde{\mathbf{V}}$ correspond to the variables in $\mathbf{V}$, that is, for each variable $V_k$ in $\mathbf{V}$ there exists a $\widetilde{V}_k$ in $\widetilde{\mathbf{V}}$. A variable $\widetilde{V}_k$ can be equivalent to the corresponding variable $V_k$, $\widetilde{V}_k \equiv V_k$, or generated as a part of a sequential prediction. If the variable is generated, the structural equations differ such that $f_{\widetilde{V}_k} \not= f_{V_k}$ and possibly the parents differ as well, $\parents_{\widetilde{V}_k} \not\equiv \parents_{V_k}$. In the previously mentioned example of satisfying SPD along the path $A \to P \to Y$, in the unfair process in Figure \ref{fig:problem}, $P$ in $\scm$ was generated as $\widehat{P}$ in $\scmhat$ and part of the sequential prediction $\yhat = f_{\yhat}(\widehat{P})$.  The structural equations of $P$ and $\widehat{P}$ differ and so do their parents, with $A$ in $\parents_P$ but not in $\parents_{\widehat{P}}$. We assume that at least the outcome, $Y$, is generated as $\yhat$ in $\scmhat$ and denote all variables in $\scmhat$ with $\widetilde{\cdot}$, regardless if they are generated or not.  We now define PP in derivatives along paths in the two SCMs.

\begin{definition}[Predictive Parity in Derivatives, PPD]
    \label{def:derivative-pp}
    Let $\mathbf{V}$ be a set of variables, including outcome $Y$, and $\widetilde{\mathbf{V}}$ be a set of partially generated variables corresponding to $\mathbf{V}$, including $\yhat$. For given SCMs $\scmdef$ and $\scmhatdef$ and a path $\pi : V_1 \to \dots \to V_k \to \dots \to Y$, we have that $\yhat$ satisfies path-specific predictive parity in derivatives along $\pi$ if, for the variables in $\pi$ and their counterparts in $\scmhat$, 
    \[
        \pder{}{\widetilde{v}_{k-1}}
        \Exp{\widetilde{V}_k \mid \widetilde{\parents}_{V_k} = \mathbf{p}}
        = 
        \pder{}{v_{k-1}}
        \Exp{ V_{k} \mid \parents_{V_k} = \mathbf{p}}
        \quad \forall k \in \{2, \dots, |\pi|\} \text{ and } 
        \forall\mathbf{p},
    \]
    where $\widetilde{\parents}_{V_k}$ are the variables in $\scmhat$ that correspond to $\parents_{V_k}$ in $\scm$ and  $\text{E}[\widetilde{V}_k \mid \widetilde{\parents}_{V_k} = \mathbf{p}]$ and $\text{E}[V_k \mid \parents_{V_k} = \mathbf{p}]$ are assumed to be differentiable.  
\end{definition}

\begin{remark}
\label{rem:ppd}
    If $\widetilde{V}_k$ is generated in a deterministic way, that is, $\widetilde{U}_k = \emptyset$, 
    and $\parents_{\widetilde{V}_k} \subseteq \widetilde{\parents}_{V_k}$, that is, no variables additional to the corresponding parents of $V_k$ are used in generating $\widetilde{V}_k$,
    then 
    $
    \text{E}[\widetilde{V}_k \mid \widetilde{\parents}_{V_k} = \mathbf{p}] 
    =
    f_{\widetilde{V}_k}(\mathbf{p})
    $.
    Moreover, if only a part of the path is modeled, then only the modeled part needs to be considered. In particular, if only direct effects on Y are modeled, that is, only $\yhat$ is generated and hence $\widetilde{\mathbf{V}}\setminus \yhat \equiv \mathbf{V} \setminus Y$, 
    then $\widetilde{\parents}_{Y} \equiv \parents_Y$ and predictive parity reduces to
    \[
        \pder{}{v_{|\pi| - 1}}
        f_{\yhat}(\mathbf{p})
        = 
        \pder{}{v_{|\pi| - 1}}
        \Exp{ Y \mid \parents_{Y} = \mathbf{p}}.
    \]
\end{remark}

The next result relates PPD to Causal PP (Definition \ref{def:causal-PP}) when the prediction algorithm only models direct effects to $Y$. 

\begin{proposition}
    \label{prop:pp-same}
    Let $\scmdef$ and $\scmhatdef$ be given SCMs where only $\yhat$ is generated (deterministically) in $\widetilde{\mathbf{V}},$ and $\parents_{\yhat} \equiv \parents_{Y}.$ Assume that $U_i \indep U_j \ \forall i, j \in \mathbf{U}$ and  consider a directed path $\pi: V_1 = X \to ... \to W \to Y.$ Then, if $\yhat$ satisfies PPD along $\pi$, $\yhat$ also satisfies causal predictive parity for all counterfactual contrasts in $X$ along $\pi$ in setting $\mathbf{Z} = \parents_Y$, according to Definition \ref{def:causal-PP}. 
\end{proposition}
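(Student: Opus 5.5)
Since only $\yhat$ is generated, every variable other than the outcome coincides in the two models ($\widetilde{V}_k \equiv V_k$ for $V_k \neq Y$). Hence along $\pi$ every mechanism except the last one, $W \to Y$ versus $W \to \yhat$, is shared by $\scm$ and $\scmhat$. The plan is to write both path-specific contrasts in Definition \ref{def:causal-PP} as the same operation applied to $\Exp{Y \mid \parents_Y}$ and to $f_{\yhat}$, respectively, and then use PPD to identify these two functions in their $w$-argument.

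\textbf{Step 1: reduce to the last mechanism.} In setting $\mathbf{Z} = \parents_Y = \mathbf{p}$, the counterfactual $Y(\pi(x))$ is obtained by setting $X = x$ only along $\pi$. The upstream counterfactual $W(\pi(x))$ is produced by the shared mechanisms $f_{V_2}, \dots, f_W$, with all off-path inputs held at their factual values. The remaining parents of $Y$ are held at their values in $\mathbf{p}$. Thus
\[
Y(\pi(x)) = f_Y\bigl(W(\pi(x)), \mathbf{p}_{-w}, U_Y\bigr), \qquad \yhat(\pi(x)) = f_{\yhat}\bigl(W(\pi(x)), \mathbf{p}_{-w}\bigr),
\]
where $W(\pi(x))$ has the same conditional law given $\mathbf{Z}$ in both models. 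By mutual independence of the $U_i$, $U_Y$ is independent of $\parents_Y$ and of all upstream background variables, so it is also independent of $W(\pi(x))$ given $\mathbf{Z}$. Conditioning on $W(\pi(x)) = w$ therefore gives $\Exp{f_Y(w, \mathbf{p}_{-w}, U_Y)} = \Exp{Y \mid \parents_Y = (w, \mathbf{p}_{-w})} =: g(w)$. Writing $h(w) := f_{\yhat}(w, \mathbf{p}_{-w})$, this yields
\[
\Exp{Y(\pi(x)) \mid \mathbf{Z}} = \Exp{g(W(\pi(x))) \mid \mathbf{Z}}, \qquad \Exp{\yhat(\pi(x)) \mid \mathbf{Z}} = \Exp{h(W(\pi(x))) \mid \mathbf{Z}}.
\]

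\textbf{Step 2: use PPD at the last edge.} By Remark \ref{rem:ppd}, PPD at the edge $W \to Y$ says $\partial_w h(w) = \partial_w g(w)$ for all $\mathbf{p}$. Integrating in $w$ with $\mathbf{p}_{-w}$ fixed, over the compact (assumed connected) support of $W$, gives $h(w) = g(w) + c(\mathbf{p}_{-w})$. The constant $c$ does not depend on $w$, and hence not on $x$. The PPD conditions on the earlier edges hold trivially, since those mechanisms are shared.

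\textbf{Step 3: form the contrast.} Substituting into the expressions from Step 1, the constant $c(\mathbf{p}_{-w})$ appears in both $\Exp{\yhat(\pi(x_1)) \mid \mathbf{Z}}$ and $\Exp{\yhat(\pi(x_0)) \mid \mathbf{Z}}$ and cancels in their difference. The remaining terms are exactly $\Exp{Y(\pi(x_1)) \mid \mathbf{Z}} - \Exp{Y(\pi(x_0)) \mid \mathbf{Z}}$, which gives causal PP for every pair $x_0, x_1$.

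\textbf{Main obstacle.} The difficult step is Step 1, specifically justifying that $\Exp{f_Y(w, \mathbf{p}_{-w}, U_Y) \mid \mathbf{Z}}$ equals the observational regression $\Exp{Y \mid \parents_Y = (w, \mathbf{p}_{-w})}$ even when it is evaluated at the counterfactual value $W(\pi(x))$. This is where independence of the $U_i$ is essential: it makes $U_Y$ independent of both $\parents_Y$ and the upstream counterfactuals. I would make this step explicit by conditioning on the full vector of upstream background variables and then applying Fubini's theorem. A secondary technical issue is the connectedness of the support, which the integration in Step 2 needs; I would state it as an implicit regularity assumption alongside the compact-support assumption already made in the paper.
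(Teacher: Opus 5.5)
Your proof is correct, and its core matches the paper's. You integrate the PPD identity at the edge $W \to Y$ to get $f_{\yhat}(w,\mathbf{p}_{-w}) = \Exp{Y \mid \parents_Y = (w,\mathbf{p}_{-w})} + c(\mathbf{p}_{-w})$. You then use independence of the background variables to read the regression as a structural mean, and cancel the $w$-free constant in the contrast.

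The difference is in what gets contrasted. The paper intervenes directly on $W$ along the single edge $\pi^*: W \to Y$. It shows that every contrast $\Exp{\yhat(\pi^*(w_1)) \mid \parents_Y} - \Exp{\yhat(\pi^*(w_0)) \mid \parents_Y}$ matches the corresponding one for $Y$. It then simply asserts that this ``corresponds to'' Definition~\ref{def:causal-PP}, leaving the passage from a contrast in $W$ to a contrast in $X$ along the whole path $\pi$ implicit.

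Your Step~1 supplies that link. You write both sides as averages of $g$ and $h$ over the common conditional law of the nested counterfactual $W(\pi(x))$ given $\mathbf{Z}$. This needs the joint statement $U_Y \indep (W(\pi(x)), \parents_Y)$, not only the paper's $Y(\pi^*(w)) \indep W \mid \parents_Y \setminus W$. The additive form $h = g + c$ survives the averaging because $c(\mathbf{p}_{-w})$ is fixed once $\mathbf{Z} = \mathbf{p}$. So your version proves the statement as written. The paper's version is shorter and isolates the more primitive fact (equality of all $W$-contrasts), from which yours follows by your averaging step.

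One regularity condition should be stated next to your connectedness assumption. The identity $h = g + c$ is only available where $\Exp{Y \mid \parents_Y = (w,\mathbf{p}_{-w})}$ is defined, i.e.\ on the observational conditional support of $W$ given $\mathbf{p}_{-w}$. By contrast, $W(\pi(x_1))$ given $\mathbf{Z} = \mathbf{p}$ is a cross-world quantity that can leave that support. This can happen, for instance, with bounded noise and another parent of $Y$ that depends on $X$; there PPD says nothing about $f_{\yhat}$ and causal PP can fail. You therefore need the conditional support to be connected and to contain the values of $W(\pi(x))$ for the contrasts considered. The paper assumes this tacitly by treating the regression as defined for all $\mathbf{p}$.
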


\begin{proof}
    Since only $\yhat$ is generated,  we only need to consider the subpath $\pi^*: W \to Y$ according to Remark \ref{rem:ppd}. Assume that $\yhat$ satisfies PPD along $W \to Y$, then we have
    \[
        \pder{}{w}
        \Exp{ \yhat \mid \parents_{Y} = \mathbf{p}}
        = 
        \pder{}{w}
        \Exp{ Y \mid \parents_{Y} = \mathbf{p}},
    \]
    since $\parents_{\yhat} \equiv \widetilde{\parents}_Y \equiv \parents_Y$. Integrating both sides with respect to $w$ gives us the following equality
    \[
        \Exp{ \yhat \mid \parents_{Y} = \mathbf{p}}
        +
        c(\mathbf{p} \setminus w)
        = 
        \Exp{ Y \mid \parents_{Y} = \mathbf{p}}.
    \]
    where $c(\mathbf{p} \setminus w)$ is constant with respect to $w$. Since $U_i \indep U_j\ \forall i, j \in \mathbf{U}$, we have $Y(\pi^*(w)) \indep W \mid \parents_Y \setminus W$, and  
    \[
        \Exp{ Y \mid \parents_{Y} = \mathbf{p} }
        =
        \Exp{ Y \mid \parents_{Y} \setminus w = \mathbf{p} \setminus w, W=w }
        =
        \Exp{ Y(\pi^*(w)) \mid \parents_{Y} = \mathbf{p} }.
    \]
    Further, since $\yhat$ is deterministically generated, $\widetilde{U}_{\yhat} = \emptyset$, and since $\yhat$ is the only generated variable in $\widetilde{\mathbf{V}}$, $\mathbf{U} \setminus U_Y = \widetilde{\mathbf{U}}$, it follows that the same applies to $\yhat$. Therefore,
    \[
        \Exp{ \yhat(\pi^*(w)) \mid \parents_{Y} = \mathbf{p}}
        +
        c(\mathbf{p} \setminus w)
        = 
        \Exp{ Y(\pi^*(w)) \mid \parents_{Y} = \mathbf{p}} ,
    \]
   and for a contrast in $W$ along $\pi^*$ we have

    \begin{equation*}
        \begin{split}
        &\Exp{ \yhat(\pi^*(w_1)) \mid \parents_{Y} = \mathbf{p}}
        -
        \Exp{ \yhat(\pi^*(w_0)) \mid \parents_{Y} = \mathbf{p}} =\\
        &
        \Exp{ Y(\pi^*(w_1)) \mid \parents_{Y} = \mathbf{p} } -
        \Exp{ Y(\pi^*(w_0)) \mid \parents_{Y} = \mathbf{p} }.
        \end{split}
    \end{equation*}

    This corresponds to Definition \ref{def:causal-PP} when $\mathbf{Z} = \parents_Y$. 
\end{proof}

The assumption $U_i \indep U_j\ \forall i, j \in \mathbf{U}$ is an assumption of no unmeasured confounding and ensures, for example, that the direct effect of $W$ on $Y$ is identified and can thus be estimated using the observed data \citep{avin_identifiability_2005}. This assumption is not necessary for a predictor to satisfy our definition of PPD. 

In this section, we have defined SP and PP in terms of derivatives tailored to continuous features. However, when features are discrete, we suggest relying on contrasts, as in \cite{plecko_reconciling_2024} instead of derivatives. 

\subsection{Fair Predictor}
\label{sec:fairpred}

With the concepts of SPD and PPD defined above, we can describe a fair predictor in terms of not-allowed and allowed paths in an SCM.

\begin{definition}[Fair predictor]
    \label{def:fair-predictor}
    Let  $\pathset_{N}$ and $\pathset_{A}$ be sets of not-allowed and allowed paths ending in $Y$ respectively. Then, a prediction algorithm $\yhat$ is called a fair predictor with respect to $Y$ if
    \begin{enumerate}[label=\roman*.]
        \item $\yhat$ satisfies SPD with respect to all paths in $\pathset_\text{N}$, and
        \item $\yhat$ satisfies PPD with respect to all paths in $\pathset_\text{A}$.
    \end{enumerate}
\end{definition}
Note that a corresponding definition of a fair predictor in terms of Definition \ref{def:causal-SP} (for Ctf-DE, Ctf-IE) and  Definition \ref{def:causal-PP} is obtained by $\yhat$ satisfying Causal SP and Causal PP in all contrasts along $\pathset_\text{N}$ and $\pathset_\text{A}$, respectively. 

The existence of a fair predictor as defined in Definition \ref{def:fair-predictor} depends on the DGP, as the following result shows for the case where a not-allowed path represents a direct effect. 

\begin{theorem}
    \label{thm:compatibility}
     Let $\scmdef$ be an SCM, $\pi_\text{N}: X \to Y$ be a not-allowed path, and $\pi_\text{A}: W \to Y$ be an allowed path, where $X, W, Y \in \mathbf{V}$. Then, there exists a predictor $\yhat$ that satisfies SPD with respect to $\pi_\text{N}$ and PPD with respect to $\pi_\text{A}$ \textit{iff}
    \begin{equation}
    \label{eq:compatibility}
        \frac{\partial^2}{\partial w \partial x} 
        \Exp{Y \mid \parents_{Y} = \mathbf{p}}
        = 0.
   \end{equation}
\end{theorem}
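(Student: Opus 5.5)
We will work in the setting of Remark \ref{rem:ppd}: only $\yhat$ is generated, deterministically, with $\parents_{\yhat} \subseteq \parents_Y$. Write $g(\mathbf{p}) := \Exp{Y \mid \parents_Y = \mathbf{p}}$, where $\mathbf{p} = (x, w, \mathbf{r})$ and $\mathbf{r}$ collects the remaining parents of $Y$. In this setting SPD along $\pi_\text{N}$ reduces to $\partial f_{\yhat}/\partial x = 0$ for all arguments; the path has length one, so the product in Definition \ref{def:derivative-sp} is a single factor. PPD along $\pi_\text{A}$ reduces to $\partial f_{\yhat}/\partial w = \partial g/\partial w$ for all $\mathbf{p}$. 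The theorem therefore becomes a statement about a single function: there is a differentiable $h(x,w,\mathbf{r})$ with $\partial_x h \equiv 0$ and $\partial_w h \equiv \partial_w g$ if and only if $\partial_x\partial_w g \equiv 0$. We will assume enough regularity on $g$ (twice differentiable, with mixed partials commuting) and that the support of $\parents_Y$ is a product of intervals, which the compact-support assumption makes natural. This lets us integrate along coordinate lines.

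\emph{Necessity.} Suppose such an $f_{\yhat}$ exists. Since $\partial_x f_{\yhat} \equiv 0$, the function $f_{\yhat}$ does not depend on $x$, and hence neither does $\partial_w f_{\yhat}$. By PPD, $\partial_w g = \partial_w f_{\yhat}$, so $\partial_w g$ is constant in $x$. Differentiating in $x$ gives $\partial_x \partial_w g = 0$, which is \eqref{eq:compatibility} once the order of differentiation is exchanged (Schwarz). If we prefer not to assume twice differentiability of $f_{\yhat}$, we can argue directly: $\partial_w g(x,w,\mathbf{r}) = \partial_w f_{\yhat}(w,\mathbf{r})$ is independent of $x$, so its $x$-derivative exists and equals zero.

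\emph{Sufficiency.} Assume \eqref{eq:compatibility}. Then $\partial_w g(x,w,\mathbf{r}) =: \phi(w,\mathbf{r})$ is independent of $x$, because the support is connected in the $x$ direction. Fix a reference value $x^*$ in the support and define
\[
f_{\yhat}(w,\mathbf{r}) := g(x^*, w, \mathbf{r}),
\]
so the predictor simply drops $X$ and evaluates the regression at a fixed $x^*$. Then $\partial_x f_{\yhat} = 0$, which is SPD along $\pi_\text{N}$. Also $\partial_w f_{\yhat}(w,\mathbf{r}) = \partial_w g(x^*,w,\mathbf{r}) = \phi(w,\mathbf{r}) = \partial_w g(x,w,\mathbf{r})$ for every $x$, which is PPD along $\pi_\text{A}$ by Remark \ref{rem:ppd}. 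An equivalent way to see the construction is that the integrability condition gives the decomposition $g(x,w,\mathbf{r}) = a(x,\mathbf{r}) + b(w,\mathbf{r})$, and we take $f_{\yhat} = b$. Because the construction uses only parents of $Y$ and is deterministic, we also need to check that the resulting $\scmhat$ is a legitimate SCM in the sense of Definition \ref{def:scm}.

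The main obstacle is the setup rather than the calculus. The statement quantifies over all predictors, so we must justify restricting to predictors of the form in Remark \ref{rem:ppd}. If $\yhat$ used extra or generated inputs, $\Exp{\yhat \mid \widetilde{\parents}_Y = \mathbf{p}}$ would replace $f_{\yhat}$. The necessity argument still goes through with $\Exp{\yhat \mid \cdot}$ in place of $f_{\yhat}$, provided SPD forces this conditional expectation to be constant in $x$. We will argue this by noting that the SPD product being zero for every $\mathbf{u}$ makes $f_{\yhat}$ itself constant in $x$, and we will then pass the derivative through the expectation. The domain assumptions (connectedness in $x$, and differentiating under the integral) also need to be stated explicitly.
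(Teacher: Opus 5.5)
Your core argument is the paper's own. The paper's proof also starts by assuming $\yhat$ is deterministic with $\parents_{\yhat}\subseteq\widetilde{\parents}_Y$. It proves necessity by combining $\partial f_{\yhat}/\partial x=0$ with PPD. For sufficiency it writes $\Exp{Y\mid\parents_Y=\mathbf p}$ as a function of $(w,\mathbf z)$ plus a function of $(x,\mathbf z)$ and freezes $x$ at a constant $c$; that is exactly your predictor evaluated at $x^*=c$. Your explicit assumptions (support connected in the $x$ direction, commuting mixed partials) usefully state what the paper leaves implicit.

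The step that would fail is the plan in your last paragraph to remove that restriction. Suppose $\yhat$ uses an input $S\notin\widetilde{\parents}_Y$ whose conditional law given $\widetilde{\parents}_Y=\mathbf p$ depends on $x$. SPD only tells you that $f_{\yhat}$ is constant in $x$ for every $\mathbf u$. That does not make $\Exp{\yhat\mid\widetilde{\parents}_Y=\mathbf p}=\int f_{\yhat}(w,\mathbf r,s)\,dP(s\mid x,w,\mathbf r)$ constant in $x$. The $x$-dependence sits in the conditioning measure, not the integrand, so differentiating under the integral does not remove it.

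In fact the equivalence is false in that generality. Take $Y=XW+U_Y$, so $\partial^2\Exp{Y\mid x,w}/\partial w\,\partial x=1\neq 0$. Add a variable $S$ with structural equation $S=X$ that is not a parent of $Y$, and use the predictor $\yhat=SW$.
\begin{itemize}
\item There is no edge $X\to\yhat$, so SPD along $\pi_\text{N}$ holds trivially.
\item $\Exp{\yhat\mid X=x,W=w}=xw$, so PPD along $W\to Y$ holds exactly.
\end{itemize}
So membership in the class of Remark~\ref{rem:ppd} must be a hypothesis, as it is in the paper, not something you derive. Your interchange argument does extend to extra inputs whose conditional law given $(x,w,\mathbf r)$ does not depend on $x$, for example exogenous noise independent of $\parents_Y$.
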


\begin{proof}
    Assume that $\yhat$ is a deterministic predictor with $\parents_{\yhat} \subseteq \widetilde{\parents}_Y$, as in Remark \ref{rem:ppd},  
    and that $\yhat$ satisfies SPD and PPD with respect to $\pi_\text{N}: X \to Y$ and $\pi_\text{A}: W \to Y$, respectively. Then, from SPD we have that $\partial f_{\yhat} / \partial x = 0$,  and therefore $\partial^2 f_{\yhat} / \partial x \partial w = 0$. Further, from PPD we have that 
    \[
        \pder{}{w} 
        f_{\yhat}(\mathbf{p})
        = 
        \pder{}{w}
        \Exp{ Y \mid \parents_{Y} = \mathbf{p}},
    \]
    and hence we must have
    \[
        0
        = 
        \frac{\partial^2}{\partial w \partial x}
        f_{\yhat}(\mathbf{p})
        = 
        \frac{\partial^2}{\partial w \partial x}
        \Exp{ Y \mid \parents_{Y} = \mathbf{p}},
    \]
   where the first equality follows from SPD, while the second follows from PPD.
   
    Conversely, if
    \[
        \frac{\partial^2}{\partial w \partial x}
        \Exp{ Y \mid \parents_{Y} = \mathbf{p}}
        = 0
    \]
    we can write
    \[
        \Exp{ Y \mid \parents_{Y} = \mathbf{p}}
        = 
        g(w, \mathbf{z}) + h(x, \mathbf{z}).
    \]
    where $\mathbf{Z} = \parents_{Y} \setminus \{X, W\}$.
    Then, we can choose the predictor $\yhat = f_{\yhat}(w, \mathbf{z}) = g(w, \mathbf{z}) + h(c, \mathbf{z})$, which satisfies SPD along $\pi_\text{N}$ since
    \begin{equation*}
        \frac{\partial}{\partial x}
        f_{\yhat}(w, \mathbf{z}) 
        = 
        0
    \end{equation*}
    and PPD along $\pi_\text{A}$ since
    \begin{equation*}
        \frac{\partial}{\partial w}
        f_{\yhat}(w, \mathbf{z}) 
        = 
        \frac{\partial}{\partial w}
        g(w, \mathbf{z})
        =
        \frac{\partial}{\partial w}
        \Exp{ Y \mid \parents_{Y} = \mathbf{p}}.
    \end{equation*}
\end{proof}
This result tells us that SPD and PPD are not always compatible, and that Equation~\eqref{eq:compatibility} must hold for a fair predictor to exist. In Appendix \ref{app:compatibility}, we give an outline of a proof of a more general result, where indirect not-allowed paths are considered. However, in the sequel we focus on predictors modeling only direct effects for which Theorem \ref{thm:compatibility} is relevant. In practice, Equation \eqref{eq:compatibility} is a strong assumption, and if it does not hold, a compromise between SPD and PPD needs to be made. 

\section{Method for Fair Prediction}
\label{sec:method}
We now propose a method to achieve a fair predictor $\yhat$, if it exists according to Theorem \ref{thm:compatibility}. We focus here on deterministic predictors modeling direct effects on $Y$, and on cases where the not-allowed paths $\pathset_\text{N}$ are of length one, that is, represent direct effects. We also assume that $\widetilde{\parents}_{\yhat} \subseteq \parents_Y$, such that Remark \ref{rem:ppd} applies. Note that, in the more general case where not-allowed paths of length greater than one are of interest, the proposed method can be applied sequentially along these paths to construct a fair predictor, as illustrated in Appendix \ref{app:indirect-effects}. If a fair predictor does not exist, the following method allows the user to tune predictors to navigate the trade-off between SPD and PPD. For that purpose, the following constrained optimization problem is posited:
\begin{equation}
    \label{eq:fair-tuning-optimisation}
    \begin{split}
        \argmin{\theta} \quad 
        &\mathcal{L}(Y, \yhat_{\theta}) \\
        \text{subject to:} \quad
        & \nabla_{\pathset_N} f_{\yhat_\theta} = \mathbf{0}, \\
        & \nabla_{\pathset_A} f_{\yhat_\theta}(\mathbf{p})= \nabla_{\pathset_A} \text{E}[Y \mid \parents_{Y} = \mathbf{p}], 
    \end{split}
\end{equation}
where $\theta$ are the parameters of $\yhat_\theta$, $\mathcal{L}(Y, \yhat_{\theta})$ is a prediction loss, $\mathbf{0}$ is a vector of zeros, and $\nabla_{\pathset} f_{\yhat_\theta}$ is a vector of partial derivatives of $f_{\yhat_\theta}$ with respect to the parents of $\yhat_\theta$ along the paths in $\pathset$; see Appendix \ref{app:nabla-path} for definition. The resulting predictor satisfies both SPD with respect to the not-allowed paths and PPD with respect to the allowed paths if Equation \eqref{eq:compatibility} holds in the DGP. Note that no assumptions about the structural equations in the DGP need to be made to implement Equation \eqref{eq:fair-tuning-optimisation}, and thus the method presented here only requires a causal diagram.

To implement the above constrained optimization, we propose a tuning scheme for neural networks that we call \textit{fair tuning}. Fair tuning consists of first fitting an unconstrained predictor $\yhatu = f_{\yhatu}(\parents_Y)$ that minimizes a prediction loss targeting $\text{E}[Y \mid \parents_{Y}]$ in Equation \eqref{eq:fair-tuning-optimisation}, for example, Mean Squared Error (MSE) for a regression problem. Then, a fair predictor is obtained using the fair tuning loss with respect to $\yhatu$:
\begin{equation}
    \label{eq:fair-tuning-loss}
    \begin{split}
        \mathcal{L}_\text{FT} \left(\yhat_{\theta}, \yhatu ; \ls, \lp \right) 
        = 
        &\quad\underbrace{\mathcal{L}(\yhat_{\theta}, \yhatu)}_{\text{Pred. loss}} \\
        &+ \underbrace{\ls    || \nabla_{\pathset_N} f_{\yhat_\theta} ||_1}_{\text{SPD loss}} \\
        &+ \underbrace{\lp  || \nabla_{\pathset_A} f_{\yhat_\theta} - \nabla_{\pathset_A} f_{\yhatu} ||_1}_{\text{PPD loss}},
    \end{split}
\end{equation}
where $|| \cdot ||_1$ is the Manhattan ($L^1$) norm and $\ls$ and $\lp$ are parameters that give weight to SPD and PPD, respectively, allowing for prioritizing SPD or PPD when Equation \eqref{eq:compatibility} does not hold. The fair tuning algorithm is presented in its entirety in Algorithm \ref{alg:fair-tuning}. In Appendix \ref{app:time-complexity}, we show that the time complexity of backpropagation with Equation \eqref{eq:fair-tuning-loss} is of the same order as regular backpropagation.

The causal diagram $\diagram$ in Algorithm \ref{alg:fair-tuning} should indicate fairness constraints through allowed and not-allowed paths. The not-allowed paths decide which direct effects are minimized, removing any possible effect transmitted along these paths. Additionally, $\diagram$ can represent causal assumptions about the allowed paths by representing them using directed arrows, or, if no causal assumption is made, by double-dashed arrows. 

\begin{algorithm}[t]
\caption{Fair tuning}
\label{alg:fair-tuning}
\begin{algorithmic}[1]
    \Require
    Dataset $\mathcal{D} = \{\mathbf{x}, \mathbf{z},\mathbf{y}\}$ and
    Causal diagram $\diagram$ over $\mathbf{V} = \{\mathbf{X}, \mathbf{Z}, Y\}$, indicating allowed paths $\pathset_\text{A}$ and not-allowed paths $\pathset_\text{N}$ from protected attributes $\mathbf{X}$ to outcome $Y$.
    \Ensure Fair tuned predictor $\yhatft$ with parameters $\hat{\theta}_\text{FT}$.

    \Statex
    
    \Statex \textbf{Step 1:} Fit unconstrained predictor $\yhatu = f_{\yhatu}(\parents_Y)$, where $\parents_Y \in \mathbf{V} \setminus Y$ are the parents of $Y$ in $\diagram$, such that
    \begin{equation*}
        \hat{\theta}_\text{U} = \underset{\theta}{\text{argmin}} \quad \mathcal{L}(\mathbf{y}, \mathbf{\yyhat_\theta}(\pparents_Y))
    \end{equation*}
    where $\mathcal{L}$ is a prediction loss targeting $\text{E}[Y \mid \parents_Y]$.

    \Statex
    
    \Statex \textbf{Step 2:} Initialize fair predictor $\widehat{Y}_{\widetilde{\theta}_\text{FT}} = f_{\widehat{Y}_{\widetilde{\theta}_\text{FT}}}(\parents_Y)$ using the parameter values $\hat{\theta}_\text{U}$ of $\yhatu$,
    \begin{equation*}
        \widetilde{\theta}_\text{FT} \gets \hat{\theta}_\text{U}.
    \end{equation*}

    \Statex
    
    \Statex \textbf{Step 3:} Define $\mathcal{L}_\text{FT}$ (Equation \ref{eq:fair-tuning-loss}) with respect to $\pathset_\text{A}$ and $\pathset_\text{N}$.

    \Statex
    
    \Statex \textbf{Step 4:} Tune $\widehat{Y}_{\widetilde{\theta}_\text{FT}}$, using the defined $\mathcal{L}_\text{FT}$, to obtain parameters $\hat{\theta}_\text{FT}$ such that 
    \begin{equation*}
        \begin{split}
            \hat{\theta}_\text{FT} =
            \underset{\theta}{\text{argmin}} \quad
            \mathcal{L}_\text{FT} \left(\mathbf{\yyhat_\theta}, \mathbf{\yyhat_{\theta_\text{U}}} ; \ls, \lp \right).
        \end{split}
    \end{equation*}

    \Statex
    
    \Statex \Return{$\yhatft$}.
\end{algorithmic}
\end{algorithm}

When implementing Algorithm \ref{alg:fair-tuning}, there are some practical considerations to be made. In Step 1, when fitting the unconstrained predictor, overfitting should be avoided by using strategies such as tuning the training hyperparameters using cross-validation or cross-fitting. Further, it is important to use a twice-differentiable activation function, such as the Exponential Linear Unit (ELU), since both the first derivative with respect to the features and the second derivative with respect to the network parameters are required in the computations of the SPD and PPD losses in Equation \eqref{eq:fair-tuning-loss}.  In Step 2, when initializing the fair predictor, $\widehat{Y}_{\widetilde{\theta}_\text{FT}}$, using the parameters of the unconstrained predictor $\hat{\theta}_\text{U}$ ensures that both prediction and PPD loss in Equation \eqref{eq:fair-tuning-loss} are minimized at the start of tuning. Finally, when tuning in Step 4, the prediction loss is computed with respect to $\yhatu$ to avoid overfitting to the data. Therefore, all available data can be used during the tuning step.

\section{Numerical Experiments}
\label{sec:numerical_experiments}
We conduct numerical experiments to illustrate the cost in predictive performance of fair tuning, while varying SPD and PPD tuning and signal-to-noise ratios.\footnote{Code with examples and to replicate the simulation study is available at \url{https://github.com/fileds/tuning-derivatives-for-causal-fairness-in-machine-learning}.} Note that loss of predictive performance is expected and even desirable when the aim is to avoid bias in the DGP. We also compare the proposed fair tuning strategy (labeled FT) with two other fairness strategies, which work with continuous-valued protected attributes: one focusing only on SP tuning (labeled SPT), based on \cite{kancheti_matching_2022}, and the other one based on the marginalization scheme of \cite{lindholm_discrimination-free_2022} (labeled Marginalize), described below. All strategies are based on an unconstrained predictor (labeled Unconstrained). Finally, we study our compatibility result (Theorem \ref{thm:compatibility}) under two settings: one linear, where SPD and PPD are compatible, and one non-linear, where they are not. Thus, we consider the following DGPs:
\begin{align}
    \label{eq:numerical-scm}
    U_{X, Z} &\sim \text{N}(0, 1) \nonumber \\
    X &= U_{X, Z} + U_X \nonumber \\
    Z &= U_{X, Z} + U_Z \nonumber \\
    W &= X + U_W \nonumber \\
    Y_\text{Linear} &= - X + Z + W + U_Y \nonumber \\
    Y_\text{Multiplicative} &= X \cdot Z \cdot W + U_Y,
\end{align}
where $X$ and $Z$ are correlated through the unobserved $U_{X, Z}$. $U_X, U_Z, U_W$ are sampled independently from $\text{N}(0, 1)$ and $U_Y$ from $\text{N}(0, \sigma^2)$. The DGPs are illustrated with the diagram in Figure \ref{fig:numerical-diagram}, where we also display the not-allowed (red, squiggly) and allowed paths (green, straight), that is, we consider the direct effect of $X$ to be not allowed, but all other paths to be allowed. 

\begin{figure}[ht]
    \centering
    \begin{tikzpicture}
        \node[neutral node] (X) at (-0.8, 0) {};
        \node[neutral node] (Z) at (0, 0.8) {};
        \node[neutral node] (W) at (0, -0.8) {};
        \node[neutral node] (Y) at (0.8, 0) {};
        
        \node[left] at (X.west) {$X$};
        \node[above] at (Z.north) {$Z$};
        \node[below] at (W.south) {$W$};
        \node[right] at (Y.east) {$Y$};
        
        \draw[<->, fair edge, dashed] (X) to[bend left] (Z);
        \draw[->, fair edge] (X) to (W);
        \draw[->, unfair edge] (X) to (Y);
        \draw[->, fair edge] (Z) to (Y);
        \draw[->, fair edge] (W) to (Y);
    \end{tikzpicture}
    \caption{Diagram describing the DGP described by Equation \eqref{eq:numerical-scm} used to generate data for numerical experiments. $X$ and $Z$ are correlated (dashed, double-headed arrow) through an unobserved background variable. The not-allowed path is $X \to Y$ (red, squiggly arrow). All other paths are allowed (green arrows).}
    \label{fig:numerical-diagram}
\end{figure}

\subsection{Predictors}
We implement fully connected, feed-forward neural networks with two hidden layers, using 32 nodes in the Linear setting and 64 nodes in the Multiplicative setting. The network sizes are chosen to reflect the complexity of the DGP and to provide sufficient flexibility for subsequent tuning. We apply the ELU activation between layers and train the unconstrained predictor $\yhatu$ using stochastic gradient descent using the ADAM optimizer \citep{kingma_adam_2017} with a batch size of 64. We train for 50 epochs in the Linear setting and for 200 epochs in the Multiplicative setting. 

We implement SPT ($\yhatspt$) and FT ($\yhatft$) by tuning $\yhatu$ with the loss function described in Equation \eqref{eq:fair-tuning-loss}. For SPT we set $\lp = 0$, that is, no PPD tuning, and for FT we set $\ls = \lp$. Note that SPT is analogous to FTU, that is, excluding the protected attribute when minimizing the SPD loss. The Marginalize predictor is implemented by replacing the value of $X$ by the mean value of $X$, that is, $\yhat_{\text{Marginalize}} = f_{\yhatu}(\Empexp{X}, W, Z)$, where $\Empexp{X}$ is the empirical expectation over the training data. We tune for 20 epochs in the Linear setting and 100 epochs in the Multiplicative setting. All predictors are implemented in PyTorch \citep{ansel_pytorch_2024} and gradients are computed using the autograd sub-library.

\subsection{Design}
In both settings, we simulate 500 replicates for six different signal-to-noise scenarios by setting $\sigma^2$ to $0, 1, 4, 9, 16, 25$. For each replicate, we generate a training and testing dataset with 1000 samples in each set. We vary the tuning parameters for $\yhatspt$ and $\yhatft$ in each scenario as follows: $\ls = \lp \in \{0.5 \sigma^2, \sigma^2, 10 \sigma^2, 100 \sigma^2\}$. Scaling by $\sigma^2$ allows us to maintain the proportions between the prediction loss (MSE) and SPD and PPD losses in Equation \eqref{eq:fair-tuning-loss}, as the oracle prediction loss in each scenario is given by $\sigma^2$. We evaluate the predictors on the test set using MSE of prediction, $\text{E}_n[Y - \yhat_\theta]^2$, the magnitude of the gradients over the not-allowed paths, $|| \nabla_{\pathset_N} f_{\yhat_\theta} ||_1$, and the Mean Absolute Error (MAE) of the gradients with respect to the true gradients in the DGP over the allowed paths, $|| \nabla_{\pathset_A} f_Y - \nabla_{\pathset_A} f_{\yhat_\theta} ||_1$. The true gradients in the DGPs along the allowed paths are 1 in the Linear setting and $x \cdot z$ for $X \to W$ and $x \cdot w$ for $Z \to Y$ in the Multiplicative setting, as defined in Equation \eqref{eq:numerical-scm}. The evaluation corresponds to the Prediction, SPD, and PPD losses in Equation \eqref{eq:fair-tuning-loss}, but with respect to DGP instead of $\yhatu$. 

\subsection{Results}

The results for $\ls = \lp \in \{\sigma^2, 100 \sigma^2\}$ are visualized in Figure \ref{fig:numerical-results}, where the prediction loss is shown in the first column, the SPD loss in the second column, and PPD losses with respect to $W \to Y, Z \to Y$ in columns three and four respectively. The SPT and FT predictors are indexed by their scaling of $\ls$ and $\ls = \lp$, respectively. Results for all levels of $\ls$ and $\lp$ can be found in Figure \ref{fig:all-results} in Appendix \ref{app:all-results}. 

\begin{figure}[t]
    \centering
    \begin{subfigure}[b]{\textwidth}
        \centering
        \includegraphics{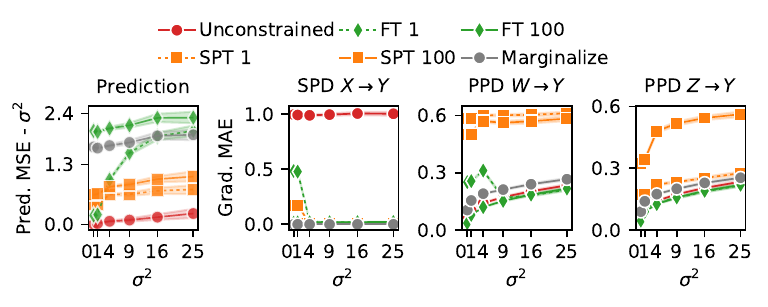}
        \caption{Linear setting}
        \label{fig:results-linear}
    \end{subfigure}
    \hfill
    \begin{subfigure}[b]{\textwidth}
        \centering
        \includegraphics{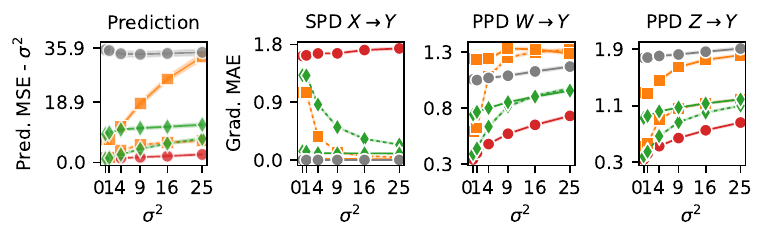}
        \caption{Multiplicative setting}
        \label{fig:results-multiplicative}
    \end{subfigure}
    \caption{Prediction loss (shifted by oracle MSE $\sigma^2$), SPD loss, and PPD loss for varying $\sigma^2$ values in (a) the Linear and (b) the Multiplicative setting. Lower is better for all losses. Results are averaged over 500 replicates with 95\% confidence intervals. SPT and FT predictors are indexed by their $\ls$ and $\ls=\lp$ tuning, respectively.}
    \label{fig:numerical-results}
\end{figure}

Let us first discuss predictive performance, the first column in Figure \ref{fig:numerical-results}. As expected and intended, predictive performance with respect to the unfair outcome $Y$ decreases when enforcing fairness constraints, and the Unconstrained predictor achieves the lowest MSE in all scenarios. The stronger the constraints, the worse the predictive performance. This is illustrated by the decreasing performance of SPT and FT predictors as $\ls$ and $\lp$ increase and by the Marginalize predictor, which enforces SPD by design, and performs poorly in both settings compared to the Unconstrained predictor. The SPT predictors show better predictive performance than the FT predictors in the Linear setting, but the relationship is reversed in the Multiplicative setting. In the Linear setting, the better performance of the SPT predictors comes at a cost in PPD, see columns three and four in Figure \ref{fig:numerical-results}. For the SPT 100 predictor, the sum of the increase in PPD for $W \to Y$ and for $Z \to Y$ is similar to the difference between the Unconstrained predictor and the SPT predictor in SPD, column two in  Figure \ref{fig:numerical-results}. This is further illustrated by the mean gradients of the SPT 100 predictor, visualized in Figure \ref{fig:results-linear-mean-gradient} in Appendix \ref{app:mean-gradient}, where the mean of the gradient $\partial \yhat_{\text{SPT 100}} / \partial X$ increases by 1, and the mean of the gradients $\partial \yhat_{\text{SPT 100}} / \partial W, \partial \yhat_{\text{SPT 100}} / \partial Z$ decrease by about 0.5 each. The SPT predictors thus compensate for the loss in prediction power due to the SPD constraint by attributing variance in $Y$ due to $X$ to the variables that are correlated with $X$, that is, $W, Z$. Conversely, the improved predictive performance of the FT predictors in the Multiplicative setting comes at a cost in SPD. However, this cost is not proportional to the improvement in PPD, as in the SPT case. In the Multiplicative setting, when SPD and PPD are not compatible, strong SPD constraints result in poor predictive performance, as shown by the SPT 100 and Marginalize predictors. For all predictors, the predictive performance decreases as the noise variance $\sigma^2$ increases. 

We now shift focus to the fairness properties displayed in the second, third, and fourth columns in Figure \ref{fig:numerical-results}. Marginalize achieves zero SPD loss by definition and, as expected from the fair tuning loss (Equation \ref{eq:fair-tuning-loss}), the SPT predictors achieve lower SPD loss than their FT counterparts, while the FT predictors perform better in PPD. The interesting difference occurs when comparing the results between the two settings. From Theorem \ref{thm:compatibility}, we expect SPD and PPD to be compatible in the Linear setting but not in the Multiplicative setting. Thus, we expect the FT predictors to achieve lower SPD and PPD loss as $\ls = \lp$ increases in the Linear setting. However, in the Multiplicative setting, we expect that when one loss decreases, the other increases. We observe that this is indeed the case in Figure \ref{fig:numerical-results}, as FT 100 performs better in both SPD and PPD compared to FT 1 for the Linear setting, but in the Multiplicative setting FT 100 performs better in SPD, but not so in PPD. This is further illustrated for several values of $\ls = \lp$ in Figure \ref{fig:ft-only} in Appendix \ref{app:ft-results}. Finally, we expect a strong constraint on SPD to result in poor performance in both prediction and PPD in the Multiplicative setting, but not so in the Linear setting. Marginalize does indeed perform reasonably well in prediction and very well in SPD and PPD in the Linear setting, but the predictive performance in the Multiplicative setting is poor, and its performance in PPD is worse compared to the other fairness methods. SPT 100 also exhibits poor predictive performance in the Multiplicative setting as a result of the high value of $\ls$. These results are further illustrated with Pareto fronts \citep{deb_multi-objective_2011} in Figure \ref{fig:pareto_numerical} in Appendix \ref{app:pareto}.

\section{Real-World Case Study: COMPAS}
\label{sec:real_world_example}

We apply the introduced fair tuning method to the COMPAS dataset to create a prediction algorithm that predicts recidivism while considering several protected attributes. We curate a dataset from the original data by selecting the following features: sex, age, race, number of prior offenses, charge degree, and the binary outcome recidivism after two years.\footnote{The data used in the original analysis can be found here \url{https://github.com/propublica/compas-analysis/} (accessed 18/06/2025).} We restrict our analysis to include only African-American (60\%) and Caucasian (40\%) people. Sex is binary in the dataset, male (80\%) and female (20\%). Age is used as a continuous feature, and the recidivism rate is 47\% in the final dataset. We adopt the diagram used in \cite{plecko_reconciling_2024}, see  Figure \ref{fig:compas-diagram}, and consider all direct paths from the protected attributes, race, sex, and age, to the outcome to be not allowed, but any path through priors and charge degree to be allowed. This allows us to show how fair tuning may handle multiple protected attributes simultaneously.

\begin{figure}[ht]
    \centering
    \begin{tikzpicture}
        \node[neutral node] (X) at (-1.6, 0) {};
        \node[neutral node] (Z1) at (-0.8, 0.8) {};
        \node[neutral node] (Z2) at (0.8, 0.8) {};
        \node[neutral node] (W1) at (-0.8, -0.8) {};
        \node[neutral node] (W2) at (0.8, -0.8) {};
        \node[neutral node] (Y) at (1.6, 0) {};
        
        \node[left] at (X.west) {Race};
        \node[above] at (Z1.north) {Sex};
        \node[above] at (Z2.north) {Age};
        \node[below] at (W1.south) {Priors};
        \node[below] at (W2.south) {Degree};
        \node[right] at (Y.east) {Recidivism};
        
        \draw[<->, fair edge, dashed] (X) to[bend left=20] (Z1);
        \draw[<->, fair edge, dashed] (X) to[bend left=20] (Z2);
        \draw[<->, fair edge, dashed] (Z1) to[bend left=20] (Z2);
        \draw[->, fair edge] (X) to (W1);
        \draw[->, fair edge] (X) to (W2);
        \draw[->, unfair edge] (X) to (Y);
        \draw[->, unfair edge] (Z1) to (Y);
        \draw[->, unfair edge] (Z2) to (Y);
        \draw[->, fair edge] (Z1) to (W1);
        \draw[->, fair edge] (Z1) to (W2);
        \draw[->, fair edge] (Z2) to (W1);
        \draw[->, fair edge] (Z2) to (W2);
        \draw[->, fair edge] (W1) to (W2);
        \draw[->, fair edge] (W1) to (Y);
        \draw[->, fair edge] (W2) to (Y);
    \end{tikzpicture}
    \caption{Diagram describing the assumed relationships between the variables in the COMPAS dataset where recidivism is the outcome. Not-allowed paths are illustrated by red, squiggly arrows, and allowed paths by green arrows. Induced dependence is marked with double-headed, dashed arrows.}
    \label{fig:compas-diagram}
\end{figure}

\subsection{Predictors}
We implement feed-forward, fully connected neural networks with two hidden layers with 64 nodes in each layer using the ELU activation function, and apply a sigmoid activation on the logits $\widehat{Z}$, that is, the output of the final linear layer of the network, before binarizing using a threshold of 0.5. Since binarization is not differentiable, and we are interested in the derivatives of the predictors, we make use of the logits both during the fitting of the unconstrained predictor and the tuning process.

When fitting the unconstrained predictor, we use binary cross-entropy loss, which targets $\text{E}[Y \mid \parents_Y],$ where $Y$ is the binary outcome representing recidivism. To avoid overfitting,  we use a cross-fitting strategy. We split the training data into five folds. For each fold, we train a neural network on the remaining folds for 100 epochs, and use the fitted neural network to generate out-of-fold logits, $\widehat{Z}_\text{OOF}$, for the current, held-out fold. We then train the unconstrained predictor for 100 epochs on the full dataset to minimize MSE with respect to $\widehat{Z}_\text{OOF}$, that is,
\begin{equation*}
    \hat{\theta}_\text{U} = \underset{\theta}{\text{argmin}} \quad 
    \frac{1}{N} \sum_{i=1}^N (\widehat{Z}_\text{OOF} - \widehat{Z}_\theta)^2,
\end{equation*}
where N is the number of samples.

We consider the same predictors with fairness constraints as in the previous section, that is, SPT, FT, and Marginalize, with $\ls = 10$ and $\lp = 0, 1, 10$ resulting in SPT 10, FT 1, and FT 10. We apply fair tuning for 50 epochs, minimizing the MSE between the logits of the target predictor and the unconstrained predictor. For example, to obtain FT 10 we apply Equation \eqref{eq:fair-tuning-loss} as
\begin{equation*}
    \hat{\theta}_{\text{FT }10} = \underset{\theta}{\text{argmin}} \quad \mathcal{L}_{\text{FT}}(\widehat{Z}_\theta, \widehat{Z}_{\hat{\theta}_\text{U}}; 10, 10),
\end{equation*}
where prediction loss is given by the MSE. When evaluating the fair tuning loss for binary features, we take the pointwise derivatives at the values 0 and 1, applying Equation \eqref{eq:fair-tuning-loss} as is. This works in practice since neural networks are continuous functions of their input. The alternative is to use contrasts, as in Definitions \ref{def:causal-SP} and \ref{def:causal-PP}, instead, see the discussion in Section \ref{sec:compas-comparison}. The Marginalize predictor uses the training set mode of the binary protected attributes instead of the mean.

\subsection{Results}
We apply the fair tuning method with the above-mentioned cross-fitting for 1000 bootstraps over the original dataset to estimate the sampling distribution of predictive and fairness performance measures.  Because the outcome is binary, predictive performance is measured by accuracy, F1 score, and Area Under the Receiver Operating Curve (AUC-ROC). These, together with fairness performance in SPD and PPD loss, are displayed in Figure \ref{fig:compas-performance} with the corresponding numbers in Table \ref{tab:compas-results} in Appendix \ref{app:compas-tables}.
\begin{figure}[t]
    \centering \includegraphics{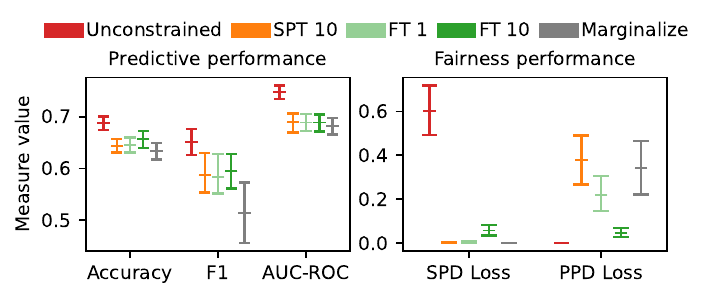}
    \caption{Predictive (Accuracy, F1, AUC-ROC) and fairness (SPD, PPD Loss) performance on the COMPAS dataset. Mean values and 95\% confidence bands are computed over 1000 bootstrap samples of the original dataset. Accuracy and F1 use a 0.5 threshold; AUC-ROC is the area under the ROC curve; PPD Loss is relative to the Unconstrained predictor. Higher is better for Accuracy, F1, AUC-RUC and lower is better for SPD and PPD Loss.}
    \label{fig:compas-performance}
\end{figure}

As expected, fairness constraints yield reduced predictive performance. 
However, the predictive performance is similar across all predictors with constraints, except for Marginalize, which performs poorly with respect to the F1 score. The most notable difference is in terms of PPD loss, where FT 10 performs significantly better than all other predictors with constraints, at a cost of performing slightly worse in terms of SPD loss. As expected, FT 1 performs slightly better in terms of PPD loss when compared to SPT 10 and Marginalize, while performing similarly in terms of SPD Loss. FT 10 provides the best compromise between SPD and PPD, as seen in Figure \ref{fig:compas-performance} and in the Pareto front, shown in Figure \ref{fig:compas-pareto} in Appendix \ref{app:compas-pareto}.

\subsection{Comparison of Derivative and Contrast Fairness Notions}
\label{sec:compas-comparison}
In the previous section, we applied the fair tuning loss  (Equation \ref{eq:fair-tuning-loss}) as is to the binary features. An alternative would be to use contrast fairness notions Causal SP and PP (Definitions \ref{def:causal-SP} and \ref{def:causal-PP}) instead. We implement Definitions \ref{def:causal-SP} and \ref{def:causal-PP} as loss functions, analogous to SPD and PPD losses, to study the effect of tuning for derivative fairness on contrast fairness. For each observation $i$, we create the contrast feature vectors  $\mathbf{x}_i^0$ and $\mathbf{x}_i^1$, where the attribute to contrast  is set to 0 in $\mathbf{x}_i^0$ and 1 in $\mathbf{x}_i^1$, while the other attributes are unchanged. We then define the CSP and CPP losses as 
\begin{equation*}
    \begin{split}
        &\mathcal{L}_\text{CSP}(\widehat{Z}_\theta) = \frac{1}{N} \sum_{i=1}^N 
        \left|
        \hat{z}_\theta(\mathbf{x}_i^1) 
        - \hat{z}_\theta(\mathbf{x}_i^0)
        \right|, \\
        &\mathcal{L}_\text{CPP}(\widehat{Z}_\theta, \widehat{Z}_{\hat{\theta}_\text{U}}) = 
        \frac{1}{N} \sum_{i=1}^N 
        \left|
        \left( \hat{z}_{\theta}(\mathbf{x}_i^1) - \hat{z}_{\theta}(\mathbf{x}_i^0) \right)
        - 
        \left( \hat{z}_{\hat{\theta}_\text{U}}(\mathbf{x}_i^1) - \hat{z}_{\hat{\theta}_\text{U}}(\mathbf{x}_i^0) \right)
        \right|,
    \end{split}
\end{equation*}
where $\widehat{Z}_\theta$ and $\widehat{Z}_{\hat{\theta}_\text{U}}$ are the logits corresponding to predictors $\yhat_{\theta}$ and $\yhatu$, respectively. We compute CSP loss for race and sex and CPP loss for charge degree, and compare feature-wise with SPD and PPD losses, respectively. 

The results are shown in Figure \ref{fig:compas-comparison} with the corresponding numbers in Table \ref{tab:compas-comparison} in Appendix \ref{app:compas-tables}.
\begin{figure}[t]
    \centering
    \includegraphics{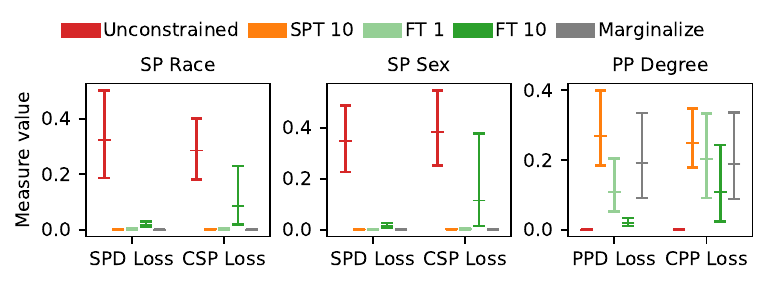}
    \caption{Comparison between derivative fairness (SPD and PPD) and contrast fairness (CSP and CPP) losses for binary features of the COMPAS dataset. Mean values and 95\% confidence bands are computed over 1000 bootstrap samples of the original dataset. Lower is better for all measures. PPD and CPP are computed w.r.t. the Unconstrained predictor.}
    \label{fig:compas-comparison}
\end{figure}
We see that minimizing SPD loss successfully minimizes CSP loss for SPT 10 and FT 1, even if the assumption of compact support in SPD (Definition \ref{def:derivative-sp}) is violated for binary features. The results are less encouraging for FT 10, which exhibits a discrepancy between SPD loss and CSP loss for both race and sex. Similarly, FT 10 shows the biggest discrepancy between PPD and CPP loss for charge degree. However, it is encouraging that the order of the tuned predictors (SPT 10, FT 1, FT 10) remains the same between derivative and contrast fairness. Ultimately, if Causal SP and PP is desired, CSP and CPP losses should be implemented in the tuning. 

\section{Conclusion}
\label{sec:conclusion}
In this paper, we formulate the notions of statistical and predictive parity in terms of partial derivatives along paths, SPD and PPD, to better handle continuous variables, building on previous results in path-specific fairness based on contrasts \citep{plecko_reconciling_2024}. We show the relation between our definitions and previous contrast definitions, and derive and study a compatibility criterion for when SPD and PPD are possible to achieve simultaneously. We define a fair predictor in terms of SPD and PPD, and develop a fair tuning method to construct a fair predictor when the compatibility criterion is met. When the compatibility criterion is not fulfilled, fair tuning offers a compromise between SPD and PPD. We compare our method with previous methods that handle continuous attributes and show that our method creates a fair predictor when the criterion is satisfied and offers a flexible way to compromise between statistical and predictive parity when the criterion is not met, better adhering to the concepts of statistical and predictive parity. Finally, we compare derivative fairness notions with contrast fairness notions \citep{plecko_reconciling_2024} when applied to binary protected attributes, and conclude that, in such cases, derivative fairness may result in contrast fairness in practice, but not generally.

\bmhead{Acknowledgements}
We are grateful to Mohammad Ghasempour and two anonymous referees for insightful comments that have helped us improve this manuscript. We also acknowledge funding from the Marianne and Marcus Wallenberg Foundation.

\appendix
\section*{Appendix}
\addcontentsline{toc}{section}{Appendix}

\section{Compatibility of Indirect Not-Allowed Paths}
\label{app:compatibility}
Version of Theorem \ref{thm:compatibility} that covers indirect paths. 

\begin{proposition}
    \label{prop:compatibility-indirect}
    Let $\scmdef$ be an SCM, $\pi_\text{N}: X \to ... \to Y$ be a not-allowed path, and $\pi_\text{A}$ be an allowed path (ending in $Y$). Then, there exists a predictor $\yhat$ that satisfies SPD with respect to $\pi_\text{N}$ and PPD with respect to $\pi_\text{A}$ if there exists at least one variable $V$ such that $V \in \pi_\text{N}$ and $V \neq X$ that satisfies 
    \[
        \frac{\partial^2}{\partial v_n \partial v_a} 
        \Exp{ V \mid \parents_{V} = \mathbf{p}} 
        = 0
    \]
    where $V_n, V_a$ are the parents of $V$ along $\pi_\text{N}$ and $\pi_\text{A}$ respectively or $V$ is not in $\pi_A$.
\end{proposition}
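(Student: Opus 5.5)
The proof is constructive, in the spirit of the sufficiency direction of Theorem~\ref{thm:compatibility}, combined with the sequential-generation idea from Section~\ref{sec:theory:derivative-notions}. Write the not-allowed path as $\pi_\text{N}: X = V_1 \to V_2 \to \dots \to V_{|\pi_\text{N}|} = Y$, and pick an index $j \geq 2$ so that $V := V_j$ is a variable with the stated property. We generate $\widetilde{V}$ in $\scmhat$ deterministically, using only the corresponding parents $\widetilde{\parents}_V$, so that Remark~\ref{rem:ppd} applies and $\Exp{\widetilde{V} \mid \widetilde{\parents}_V = \mathbf{p}} = f_{\widetilde{V}}(\mathbf{p})$. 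Every descendant of $V$ along both paths, up to and including $\yhat$, is then generated sequentially from the generated version of its parent. All other mechanisms are set to copy the DGP's conditional expectations $\Exp{V_k \mid \parents_{V_k} = \mathbf{p}}$.

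The key step is choosing $f_{\widetilde{V}}$, and we split into two cases.

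\emph{Case 1: $V$ is not on $\pi_\text{A}$.} Take $f_{\widetilde{V}}(\mathbf{p}) = \Exp{V \mid \parents_V = \mathbf{p}}$ evaluated with $v_n$ frozen at a constant $c$. Then $\partial f_{\widetilde{V}}/\partial v_n = 0$. Since no edge into $V$ lies on $\pi_\text{A}$, no PPD constraint is affected.

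\emph{Case 2: $V$ is on $\pi_\text{A}$ and the mixed partial vanishes.} Integrating as in the proof of Theorem~\ref{thm:compatibility}, we can write
\[
\Exp{V \mid \parents_V = \mathbf{p}} = g(v_a, \mathbf{z}) + h(v_n, \mathbf{z}),
\]
where $\mathbf{Z} = \parents_V \setminus \{V_n, V_a\}$. Set $f_{\widetilde{V}} = g(v_a, \mathbf{z}) + h(c, \mathbf{z})$. This gives $\partial f_{\widetilde{V}}/\partial v_n = 0$ and $\partial f_{\widetilde{V}}/\partial v_a = \partial \Exp{V \mid \parents_V}/\partial v_a$.

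In both cases SPD along $\pi_\text{N}$ follows immediately. The product in Definition~\ref{def:derivative-sp} contains the factor $\partial f_{\widetilde{V}}/\partial v_{j-1} = 0$. The remaining factors are derivatives of differentiable mechanisms on a compact support, hence finite, so the product vanishes for all $x, \mathbf{u}$.

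For PPD along $\pi_\text{A}$, we check Definition~\ref{def:derivative-pp} edge by edge. At $V$ itself the equality holds by construction in Case 2. At every other generated node $V_k$ on $\pi_\text{A}$, the mechanism is the DGP conditional expectation, so the derivative with respect to its $\pi_\text{A}$-parent matches. Here we must read $\widetilde{\parents}_{V_k} = \mathbf{p}$ as conditioning on the values of the generated parents; the comparison in Definition~\ref{def:derivative-pp} is pointwise in $\mathbf{p}$, so substituting $\widetilde{V}$ for $V$ as an input does not change the partial derivative of the mechanism.

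At non-generated nodes on $\pi_\text{A}$, $\widetilde{V}_k \equiv V_k$ and the equality is trivial, provided their parents are unchanged. This is where I expect the main obstacle: a non-generated node upstream of $V$ has correct parents, but a node downstream of $V$ must be generated, or else its conditional expectation given a generated parent could differ. Insisting that all descendants of $V$ along either path are generated with mechanisms equal to $\Exp{V_k \mid \parents_{V_k} = \mathbf{p}}$ resolves this. It also means we need to check that this choice yields a well-defined SCM with a unique solution, which holds because the generated mechanisms respect the acyclic order of $\diagram$.
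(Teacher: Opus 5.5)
Your proposal follows the paper's proof outline and is correct at the paper's level of rigor: the same split on whether $V\in\pi_\text{A}$, the construction from the proof of Theorem~\ref{thm:compatibility} with $V,\widetilde{V}$ in place of $Y,\yhat$, and sequential generation after $V$ so that the zero factor at $V$ kills the SPD product. Your edge-by-edge PPD check spells out what the outline leaves implicit, but it needs one small fix: in Case~1 a node $M\in\pi_\text{A}\setminus\pi_\text{N}$ can still be a descendant of $V$ through an edge lying on neither path (e.g.\ $V\to M$), so you should generate every node of $\pi_\text{A}$ that is a descendant of $V$ in $\diagram$ as its DGP conditional expectation, not only the descendants of $V$ along the two paths, and Remark~\ref{rem:ppd} then gives PPD on those edges as well.
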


\textbf{Proof outline:} If $V$ is not in $\pi_A$, $V$ may be generated as $\widehat{V}$ such that SPD is satisfied along $\pi_\text{N}$ without affecting $\pi_\text{A}$. If $V$ is in $\pi_A$, the result follows from replacing $Y$ and $\yhat$ with $V$ and $\widehat{V}$, respectively, in the proof of Theorem \ref{thm:compatibility}. Any sequential predictor based on $\widehat{V}$, which satisfies SPD along the subpath $X \to ... \to V$, would then satisfy SPD along $\pi_\text{N}$ (follows from the definition of SPD) without affecting PPD. 

\section{Partial Derivatives with Respect to Paths}
\label{app:nabla-path}
For notational convenience, we define the operator $\nabla_{\pathset}$ with respect to a set of paths $\pathset = \{\pi_1, ..., \pi_n\}$ that end in $V$ in an SCM $\scmdef$ as the $n$-dimensional vector
\begin{equation*}
    \nabla_{\pathset} f_V = 
    \mathbf{e}_1 \pder{f_{V}}{\text{pa}_1} +
    ... +
    \mathbf{e}_n \pder{f_{V}}{\text{pa}_n},
\end{equation*}
where $f_{V}$ is the mechanism in $\mathbf{F}$ assigning values to $V$, $\text{pa}_k$ is the parent of $V$ along path $\pi_k$, $\mathbf{e}_i$ is the $i$th vector of the canonical basis of $\mathbb{R}^n$. 

For example, if we have a set of paths $\pathset = \{\pi_1, \pi_2\}$, where $\pi_1: X \to W \to Y, \pi_2: X \to Z \to Y$, then 
\begin{equation*}
    \nabla_{\pathset} f_Y= 
    \mathbf{e}_1 \pder{f_Y}{w} +
    \mathbf{e}_2 \pder{f_Y}{z}.
\end{equation*}

In the second constraint in Equation \eqref{eq:fair-tuning-optimisation}, when $\nabla_{\pathset}$ is applied to the expectation $\text{E}[Y \mid \parents_Y = \mathbf{p}]$, the expectation should be interpreted as a function of $\mathbf{p}$ with parents $\parents_Y$.

\section{Fair sequential prediction}
\label{app:indirect-effects}
We illustrate how a fair sequential predictor can be achieved to illustrate how to handle indirect not-allowed paths. For the sake of illustration, we consider a DGP with structural equations
\begin{equation}
\label{eq:indirect-effects-dgp}
    \begin{split}
        U_{X, Z} &\sim \text{N}(0, 1), \\
        X &= U_{X, Z} + U_X, \\
        Z &= U_{X, Z} + U_Z, \\
        W &= \beta_{X,W} X + \beta_{Z,W} Z + U_W, \\
        Y &= \beta_{X,Y} X + \beta_{W,Z,Y} W Z + U_Y,
    \end{split}
\end{equation}
where $U_X, U_Z, U_W$, and $U_Y$ are sampled from $\text{N}(0,1)$. Note that there $Z$ appears in the mechanism of $W$, and that there is an interaction term between $W$ and $Z$ in the mechanism of $Y$. We consider $X \to Y$ and $X \to W \to Y$ to be not-allowed paths from $X$ to $Y$, $\pathset_\text{N}$, and consider all remaining paths to be allowed, $\pathset_\text{A}$, see Figure \ref{fig:indirect-effects-dgp} for an illustration.

\begin{figure}[ht]
    \centering
    \begin{tikzpicture}
        \node[neutral node] (X) at (-0.8, 0) {};
        \node[neutral node] (Z) at (0, 0.8) {};
        \node[neutral node] (W) at (0, -0.8) {};
        \node[neutral node] (Y) at (0.8, 0) {};
        
        \node[left] at (X.west) {$X$};
        \node[above] at (Z.north) {$Z$};
        \node[below] at (W.south) {$W$};
        \node[right] at (Y.east) {$Y$};
        
        \draw[<->, fair edge, dashed] (X) to[bend left] (Z);
        \draw[->, unfair edge] (X) to (W);
        \draw[->, unfair edge] (X) to (Y);
        \draw[->, fair edge] (Z) to (W);
        \draw[->, fair edge] (Z) to (Y);
        \draw[->, unfair edge] (W) to (Y);
    \end{tikzpicture}
    \caption{Diagram describing the DGP described by Equation \eqref{eq:indirect-effects-dgp} used to generate data for numerical experiments. $X$ and $Z$ are correlated (dashed, double-headed arrow) through an unobserved background variable. The not-allowed paths are illustrated as red, squiggly arrows. All other paths are allowed.}
    \label{fig:indirect-effects-dgp}
\end{figure}

Due to the interaction term in the mechanism of $Y$, the condition in Theorem \ref{thm:compatibility} is not satisfied for $Y$ w.r.t. not-allowed path $X \to W \to Y$ and allowed path $Z \to Y$:
\begin{equation*}
    \frac{\partial}{\partial z \partial w} \Exp{Y \mid x, w, z}= \beta_{W,Z,Y} \neq 0.
\end{equation*}
Therefore, applying fair tuning directly to $Y$ cannot result in a fair predictor. Instead, we can rely on Proposition \ref{prop:compatibility-indirect} in Appendix \ref{app:compatibility}, which states that a fair predictor can be achieved if one variable along the not-allowed path satisfies the condition w.r.t. the allowed path. $W$ satisfies the condition w.r.t. not-allowed path $X \to W$ and allowed path $Z \to W$ as
\begin{equation*}
    \frac{\partial}{\partial z \partial x} \Exp{W \mid x, z} = 0.
\end{equation*}

Assuming that we know the structural equations of the DGP (Equation \ref{eq:indirect-effects-dgp}), we can generate $W$ as $\widehat{W} = f_{\widehat{W}}(Z) = \beta_{Z,W} Z $ which satisfies SPD along $X \to W$ and PPD along $Z \to W$. Using $\widehat{W}$, we can create the sequential predictor
\begin{equation*}
    \begin{split}
        \widehat{W} &= f_{\widehat{W}}(Z) = \beta_{Z,W} Z \\
        \yhat &= f_{\yhat}(\widehat{W},Z) = \beta_{W,Z,Y} \widehat{W} Z.
    \end{split}
\end{equation*}
Then $\yhat$ satisfies SPD along the not-allowed paths as
\begin{equation*}
    \frac{\partial f_{\yhat}}{\partial x} 
    =
    \frac{\partial f_{\yhat}}{\partial \widehat{w}} 
    \frac{\partial f_{\widehat{W}}}{\partial x} 
    = 0,
\end{equation*}
since $\frac{\partial f_{\widehat{W}}}{\partial x} = 0$, and
PPD along $Z \to W \to Y$ as
\begin{align*}
    &\frac{\partial }{\partial \widehat{w}} \Exp{\yhat \mid x, \widehat{w}, z}
    =
    \beta_{W,Z,Y} z 
    =
    \frac{\partial}{\partial w} \Exp{Y \mid x, w, z}
    \quad \text{and}\\[0.5em]
    &\frac{\partial }{\partial z} \Exp{\widehat{W} \mid x, z}
    =
    \beta_{Z,W}
    =
    \frac{\partial}{\partial z} \Exp{W \mid x, z},
\end{align*}
and PPD along $Z \to Y$ as
\begin{equation*}
    \frac{\partial }{\partial z} \Exp{\yhat \mid x, \widehat{w}, z}
    = 
    \beta_{Z,Y} 
    =
    \frac{\partial}{\partial z} \Exp{Y \mid x, w, z}.
\end{equation*}

In practice, when the structural equations are unknown and we have access to a dataset $\mathcal{D} = \{\mathbf{x}, \mathbf{w}, \mathbf{z}, \mathbf{y}\}$, we can create a sequential predictor using fair tuning (Algorithm \ref{alg:fair-tuning}). First, we obtain $\widehat{W}_\text{FT}$ using input $\mathcal{D} \setminus \mathbf{y}$ and the diagram $\diagram$ in Figure \ref{fig:indirect-effects-dgp}. Then we obtain $\yhat_\text{FT}$ using input $\mathcal{D}$ and $\diagram$, with the path $W \to Y$ indicated as an allowed path. Finally, we can construct the sequential predictor as 
\begin{equation*}
    \begin{split}
        \widehat{W}_\text{FT} &= 
        f_{\widehat{W}_\text{FT}}(Z) \\
        \yhat_\text{FT} &= f_{\yhat_\text{FT}}(\widehat{W}_\text{FT},Z).
    \end{split}
\end{equation*}

Sequential prediction should be preferred when the condition in Theorem \ref{thm:compatibility} is not assumed to hold for $Y,$ but to hold for a variable along the path, such as $W$ above. Another case when sequential prediction can be useful is if a parent along a not-allowed path has strong predictive power for $Y$, but is not deemed a business necessity. A limitation of sequential prediction using regression is that we need a variable to regress on. Hence, the presence of $Z$ in the mechanism of $W$ in the above example, where we regress $W$ on $Z$, is necessary.

\section{Time Complexity of Backpropagation with Fair Tuning Loss}
\label{app:time-complexity}
\FloatBarrier
Algorithm 2 details the computations necessary to backpropagate with the fair tuning loss (Equation \ref{eq:fair-tuning-loss}), with the difference to regular backpropagation highlighted in yellow. Consider a neural network architecture with one hidden layer with $h$ neurons and one output neuron. Then, backpropagation for $n$ training samples with $m$ features is of time complexity $O(n (m h + h))$.\footnote{Based on the scikit learn User Guide, \url{https://scikit-learn.org/stable/modules/neural_networks_supervised.html} accessed 23/02/2026.} Backpropagating with Equation \eqref{eq:fair-tuning-loss} adds the computation of the gradient loss (yellow highlights in Algorithm \ref{alg:fair-tuning-backprop}), which is of time complexity $O(n m)$. This is smaller than the time complexity of backpropagation, so fair tuning is also of $O(n (m h + h))$. However, since fair tuning involves more computations, specifically the gradient loss, the total time consumption will be larger. 
\begin{algorithm}
\caption{Backpropagation in fair tuning}
\label{alg:fair-tuning-backprop}
\begin{algorithmic}[1]
\Require Model $f_\theta$, feature values $\mathbf{x}$, outcomes $\mathbf{y}$, outcome gradients $\nabla\mathbf{y}$, loss $\mathcal{L} = \text{MSE}$, learning rate $\eta$

\State $\mathbf{\widehat{y}} \leftarrow f_\theta(\mathbf{x})$
\Comment{\textit{Generate predictions}}
\State $\mathcal{L}_{\text{pred}} \leftarrow \mathcal{L}(\mathbf{\widehat{y}},\, \mathbf{y})$
\Comment{\textit{Compute prediction loss}}

\vspace{0.5em}

\State
\colorbox{yellow!60}{
$\nabla \mathbf{\widehat{y}} \leftarrow \nabla_\mathbf{x} f_\theta(\mathbf{x})$
}
\Comment{\textit{Gradient computation}}

\State 
\colorbox{yellow!60}{
$\mathcal{L}_{\text{grad}} \leftarrow \mathcal{L}\!\left(\nabla\widehat{\mathbf{y}},\, \nabla \mathbf{y}\right)$
}
\Comment{\textit{Gradient loss}}

\vspace{0.5em}

\State $\mathcal{L}_{\text{total}} \leftarrow \mathcal{L}_{\text{pred}} + \mathcal{L}_{\text{grad}}$
\Comment{\textit{Sum losses}}
\State $\theta \leftarrow \theta - \eta \nabla_\theta \mathcal{L}_{\text{total}}$
\Comment{\textit{Update parameters}}
\end{algorithmic}
\end{algorithm}

We study the time complexity of backpropagating with and without Equation \eqref{eq:fair-tuning-loss} by simulation. We implement Algorithm \ref{alg:fair-tuning-backprop} in Python using PyTorch \citep{ansel_pytorch_2024} with Adam optimizer \citep{kingma_adam_2017} and execute it for 1000 replications for sample sizes $n$, number of features $m$, and number of hidden neurons $h$ in $\{100, 200, 300, 400, 500 \}$. The values in $\mathbf{x}$, $\mathbf{y}$, and $\nabla\mathbf{y}$ are randomly drawn from a normal distributions, $\text{N}(0,1)$. For each replicate, we time the computation of lines 1, 2, 5, 6 in Algorithm \ref{alg:fair-tuning-backprop} for backpropagation and lines 1-6 for fair tuning. The results were obtained on one core of a 11th Gen Intel(R) Core(TM) i7-1165G7 CPU at 2.80GHz, and the whole simulation study took 14 minutes and 23 seconds to run. 

The results are shown in Figure \ref{fig:time-complexity}, where each column corresponds to one of $n$, $m$, and $h$ on the x-axis, and each row corresponds to fixed values of the other variables, as indicated by the titles of each subplot. We see that the time consumption for both backpropagation and fair tuning scales linearly in $n$, $m$, and $h$, with fair tuning growing at a faster rate. 

\begin{figure}[ht!]
    \centering
    \includegraphics{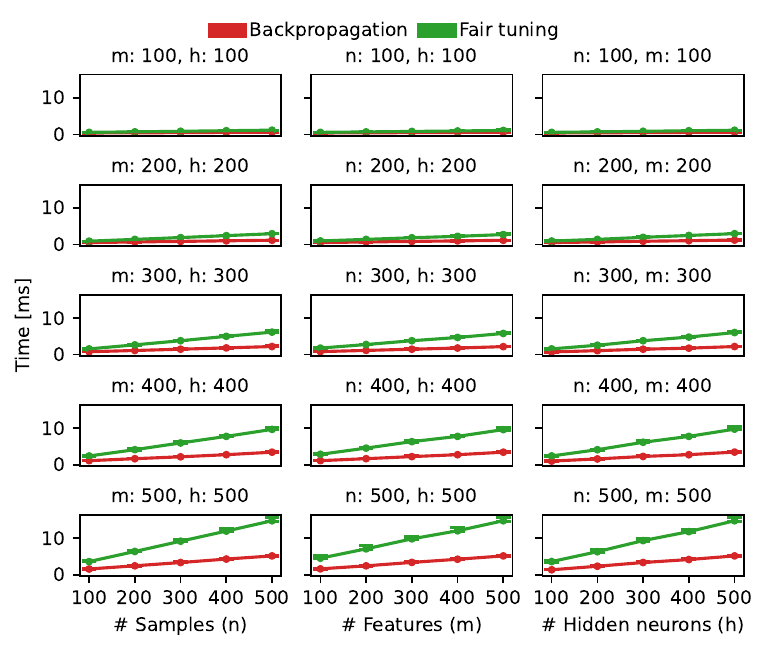}
    \caption{Time complexity of fair tuning and backpropagation, Algorithm \ref{alg:fair-tuning-backprop}. Each column corresponds to varying one variable ($n$, $m$, or $h$) along the x-axis, and each row corresponds to different, fixed values of the other two variables as indicated by subplot titles. Mean values and 95\% confidence intervals are based on 1000 replicates. The axes scales are the same for all subplots.}
    \label{fig:time-complexity}
\end{figure}

\newpage

\section{Results for All Models in Numerical Experiments}
\label{app:all-results}
\FloatBarrier

\begin{figure}[!ht]
    \centering
    \begin{subfigure}[b]{\textwidth}
        \centering
        \includegraphics{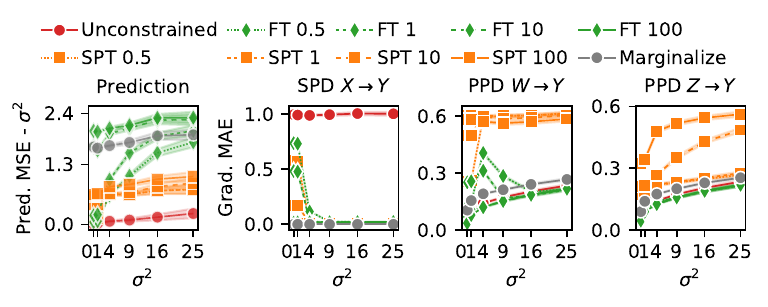}
        \caption{Linear setting}
        \label{fig:all-linear}
    \end{subfigure}
    \hfill
    \begin{subfigure}[b]{\textwidth}
        \centering
        \includegraphics{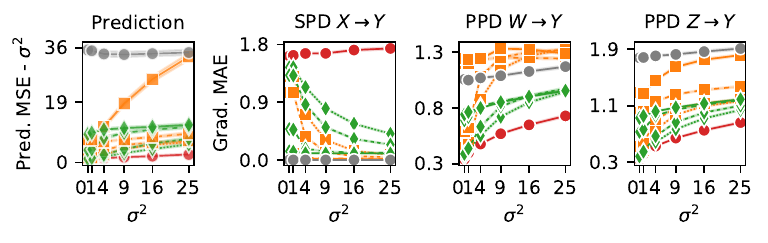}
        \caption{Multiplicative setting}
        \label{fig:all-multiplicative}
    \end{subfigure}
    \caption{Prediction loss (shifted by oracle MSE $\sigma^2$), SPD loss, and PPD loss for varying $\sigma^2$ values in (a) the Linear and (b) the Multiplicative setting. Lower is better for all losses. Results are averaged over 500 replicates with 95\% confidence intervals. SPT and FT predictors are indexed by their $\ls$ and $\ls=\lp$ tuning, respectively.}
    \label{fig:all-results}
\end{figure}

\FloatBarrier

\section{Mean Gradients of Predictors Trained on Linear DGP in Numerical Experiment}
\label{app:mean-gradient}

\FloatBarrier

 Figure \ref{fig:results-linear-mean-gradient} shows the mean gradient of predictors trained on data from the Linear DGP. Since the DGP is linear, the gradient is constant and the mean gradient of the predictors is a reasonable estimate of the gradient. The gradients of the Unconstrained predictor are close to the true gradient, $(-1, 1, 1)$. SPT 100's gradient along $X \to Y$ is increased from $-1$ to 0 due to the applied constraint while the gradients along $W \to Y$ and $Z \to Y$ are decreased compared to the Unconstrained predictor, from around 1 to around 0.5. This illustrates how the SPT predictor attributes variance to $W, Z$ when it is not allowed to attribute that variance to $X$ due to the constraint. This highlights the need for considering both statistical and predictive parity.

\begin{figure}[ht!]
    \centering
    \includegraphics{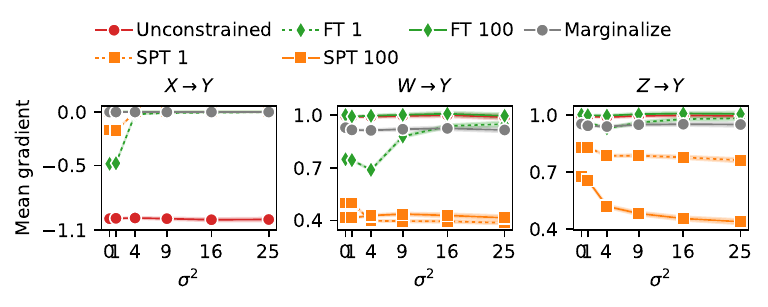}
    \caption{Mean gradients of different paths of several predictors trained on data from the Linear setting.}
    \label{fig:results-linear-mean-gradient}
\end{figure}

\FloatBarrier

\section{Results for All FT Predictors in Numerical Experiment}
\label{app:ft-results}
\begin{figure}[!ht]
    \centering
    \begin{subfigure}[b]{\textwidth}
        \centering
        \includegraphics{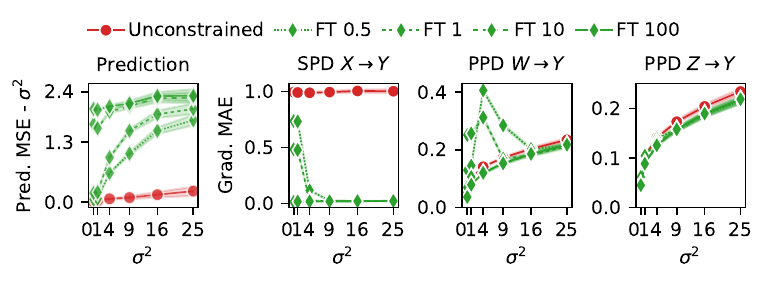}
        \caption{Linear setting}
        \label{fig:ft-only-linear}
    \end{subfigure}
    \hfill
    \begin{subfigure}[b]{\textwidth}
        \centering
        \includegraphics{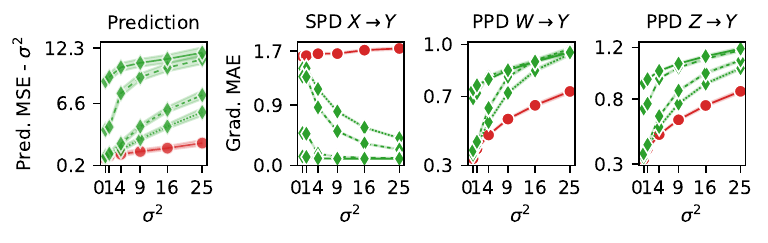}
        \caption{Multiplicative setting}
        \label{fig:ft-only-multiplicative}
    \end{subfigure}
    \caption{Prediction loss (shifted by oracle MSE $\sigma^2$), SPD loss, and PPD loss for varying $\sigma^2$ values in (a) the Linear and (b) the Multiplicative setting. Lower is better for all losses. Results are averaged over 500 replicates with 95\% confidence intervals. FT predictors are indexed by their$\ls=\lp$ tuning.}
    \label{fig:ft-only}
\end{figure}

\FloatBarrier

\section{Pareto Fronts in Numerical Experiments}
\label{app:pareto}

\FloatBarrier

A Pareto front shows solutions that are Pareto efficient for an optimization problem with more than one objective. In our case, a solution of tuning parameters $(\ls, \lp)$ is on the Pareto front if, for a given value of the SPD loss, the PPD is minimized.

Figure \ref{fig:pareto_numerical} shows the Pareto fronts for the numerical experiments when $\sigma = 0$ computed with $\ls, \lp$ linearly spaced from 0 to 100. In the Linear setting the Pareto front is close to $(0, 0)$, indicating the compatibility of SPD and PPD. In the Multiplicative case the Pareto front does not come close to zero, indicating that SPD and PPD is not compatible. In the Linear setting the Pareto front does not reach $(0,0)$ due to the estimation error of the true gradient which contributes to the PPD Loss. 

\begin{figure}[!ht]
    \centering
    \begin{subfigure}[b]{0.360\linewidth}
        \centering
        \includegraphics[width=\linewidth]{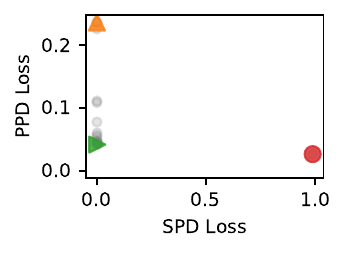}
        \caption{}
        \label{fig:pareto_linear}
    \end{subfigure}
    \hfill
    \begin{subfigure}[b]{0.56\linewidth}
        \centering
        \includegraphics[width=\linewidth]{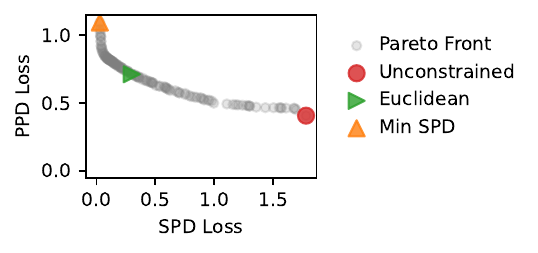}
        \caption{}
        \label{fig:pareto_multiplicative}
    \end{subfigure}
    \caption{The Pareto fronts for (a) the Linear and (b) the Multiplicative setting with $\sigma = 0$, based on 1024 linearly spaced values of $\ls$ and $\lp$ from 0 to 100. PPD loss is computed using the true gradient of the DGP. Unconstrained refers to the predictor with $\ls = \lp = 0$, Min SPD has the lowest SPD loss, Euclidean is the predictor closest to $(0,0)$ in Euclidean distance.}
    \label{fig:pareto_numerical}
\end{figure}

\FloatBarrier

\section{Pareto Front in COMPAS}
\label{app:compas-pareto}
\FloatBarrier
Figure \ref{fig:compas-pareto} shows the Pareto front of applying fair tuning on an unconstrained predictor trained on the COMPAS dataset. FT 10 achieves a good compromise between SPD and PPD losses, lying close to the $(0,0)$-point on the Pareto front.

\begin{figure}[!ht]
    \centering
    \includegraphics{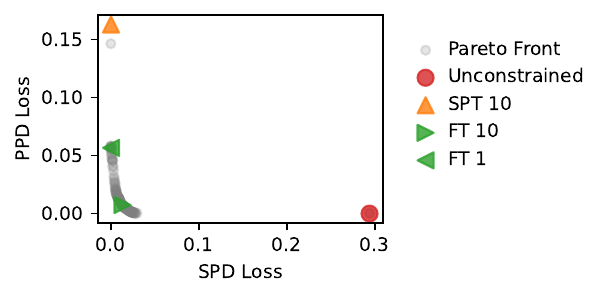}
    \caption{The Pareto front from 1024 predictors obtained over a $32 \times 32$ grid of $\lambda_s, \lambda_p$ values ranging from 0 to 10. Tuned predictors from Figure \ref{fig:compas-performance} are added and highlighted as stars.}
    \label{fig:compas-pareto}
\end{figure}

\FloatBarrier

\section{Tables of COMPAS results}
\label{app:compas-tables}

\FloatBarrier

\begin{table}[!ht]
\centering
\caption{Predictive (Accuracy, F1, AUC-ROC) and fairness (SPD, PPD Loss) performance on the COMPAS dataset. Mean values (first row) and confidence intervals (second row) are computed over 1000 bootstrap samples. Accuracy and F1 use a 0.5 threshold; AUC-ROC is the area under the ROC curve; PPD Loss is relative to the Unconstrained predictor. Higher is better for Accuracy, F1, AUC-ROC; lower is better for SPD and PPD Loss.}
\label{tab:compas-results}
\begin{tabular}{lrrrrr}
\toprule
Model           & Accuracy           & F1 Score           & AUC-ROC            & SPD Loss           & PPD Loss           \\ 
\midrule
Uncon.          &               0.69 &               0.65 &               0.75 &               0.60 &               0.00 \\ 
                &       (0.67, 0.70) &       (0.63, 0.68) &       (0.73, 0.76) &       (0.49, 0.72) &       (0.00, 0.00) \\[0.3em] 
FT 10           &               0.66 &               0.60 &               0.69 &               0.06 &               0.05 \\ 
                &       (0.64, 0.67) &       (0.56, 0.63) &       (0.67, 0.70) &       (0.03, 0.08) &       (0.03, 0.07) \\[0.3em] 
FT 1            &               0.65 &               0.58 &               0.69 &               0.00 &               0.22 \\ 
                &       (0.63, 0.66) &       (0.55, 0.63) &       (0.67, 0.71) &       (0.00, 0.01) &       (0.15, 0.30) \\[0.3em] 
SPT 10          &               0.64 &               0.59 &               0.69 &               0.00 &               0.38 \\ 
                &       (0.63, 0.66) &       (0.55, 0.63) &       (0.67, 0.71) &       (0.00, 0.00) &       (0.27, 0.49) \\[0.3em] 
Marg.           &               0.63 &               0.51 &               0.68 &               0.00 &               0.34 \\ 
                &       (0.62, 0.65) &       (0.46, 0.57) &       (0.67, 0.70) &       (0.00, 0.00) &       (0.22, 0.47) \\[0.3em] 
\bottomrule
\end{tabular}
\end{table}
\begin{table}[!ht]
\centering
\footnotesize
\caption{Comparison between derivative fairness (SPD and PPD) and contrast fairness (CSP and CPP) for the binary features of the COMPAS dataset. Mean values (first row) and 95\% confidence intervals (second row) are computed over 1000 bootstrap samples of the original dataset. PPD and CPP are computed w.r.t. the Unconstrained predictor.}
\label{tab:compas-comparison}
\begin{tabular}{lrrrrrr}
\toprule
Model           & SPD Race           & CSP Race           & SPD Sex            & CSP Sex            & PPD Degree         & CPP Degree         \\ 
\midrule
Uncon.          &               0.32 &               0.29 &               0.35 &               0.38 &               0.00 &               0.00 \\ 
                &       (0.19, 0.50) &       (0.18, 0.40) &       (0.23, 0.49) &       (0.25, 0.55) &       (0.00, 0.00) &       (0.00, 0.00) \\[0.3em] 
FT 10           &               0.02 &               0.08 &               0.01 &               0.11 &               0.02 &               0.11 \\ 
                &       (0.01, 0.03) &       (0.02, 0.23) &       (0.01, 0.03) &       (0.01, 0.38) &       (0.01, 0.03) &       (0.02, 0.24) \\[0.3em] 
FT 1            &               0.00 &               0.00 &               0.00 &               0.00 &               0.11 &               0.20 \\ 
                &       (0.00, 0.00) &       (0.00, 0.00) &       (0.00, 0.00) &       (0.00, 0.00) &       (0.05, 0.20) &       (0.09, 0.33) \\[0.3em] 
SPT 10          &               0.00 &               0.00 &               0.00 &               0.00 &               0.27 &               0.25 \\ 
                &       (0.00, 0.00) &       (0.00, 0.00) &       (0.00, 0.00) &       (0.00, 0.00) &       (0.18, 0.40) &       (0.18, 0.35) \\[0.3em] 
Marg.           &               0.00 &               0.00 &               0.00 &               0.00 &               0.19 &               0.19 \\ 
                &       (0.00, 0.00) &       (0.00, 0.00) &       (0.00, 0.00) &       (0.00, 0.00) &       (0.09, 0.33) &       (0.09, 0.34) \\[0.3em] 
\bottomrule
\end{tabular}
\end{table}

\FloatBarrier

\bibliography{sn-bibliography}

\end{document}